\documentclass{article}

\PassOptionsToPackage{numbers,compress}{natbib}
\usepackage[preprint]{neurips_2026}

\usepackage[utf8]{inputenc} 
\usepackage[T2A,T1]{fontenc}    
\usepackage{CJKutf8}       
\newcommand{\kr}[1]{{\upshape\CJKfamily{mj}#1}}
\newcommand{\trad}[1]{{\upshape\CJKfamily{bsmi}#1}}
\newcommand{\rui}[1]{{\fontencoding{T2A}\fontfamily{cmr}\itshape\selectfont #1}}
\usepackage{hyperref}       
\usepackage{url}            
\usepackage{booktabs}       
\usepackage{amsfonts}       
\usepackage{nicefrac}       
\usepackage{microtype}      
\usepackage{xcolor}         
\usepackage{natbib}
\usepackage{graphicx}
\usepackage{float}
\usepackage{amsmath}
\usepackage{amsthm}
\newtheorem{proposition}{Proposition}
\newtheorem{lemma}{Lemma}
\theoremstyle{definition}
\newtheorem{assumption}{Assumption}
\usepackage{enumitem}
\usepackage[table]{xcolor}
\definecolor{best}{RGB}{212,230,249}
\definecolor{second}{RGB}{223,243,246}
\usepackage{wrapfig}

\title{CLORE: Content-Level Optimization for \\ Reasoning Efficiency}

\author{%
  \textbf{Yuyang Wu}\textsuperscript{1}\thanks{Equal contribution.}\hspace{1.2em}
  \textbf{Qiyao Xue}\textsuperscript{3}\footnotemark[1]\hspace{1.2em}
  \textbf{Guanxing Lu}\textsuperscript{1}\footnotemark[1]\hspace{1.2em}
  \textbf{Weichen Liu}\textsuperscript{4} \\
  \textbf{Zihan Wang}\textsuperscript{2}\hspace{1.2em}
  \textbf{Manling Li}\textsuperscript{2}\thanks{Corresponding authors.}\hspace{1.2em}
  \textbf{Olexandr Isayev}\textsuperscript{1}\footnotemark[2] \\[3pt]
  \textsuperscript{1}Carnegie Mellon University \hspace{1em}
  \textsuperscript{2}Northwestern University \\
  \textsuperscript{3}University of North Carolina, Chapel Hill \hspace{1em}
  \textsuperscript{4}University of Pittsburgh
}

\begin{document}
\begin{CJK*}{UTF8}{gbsn}

\maketitle

\begin{abstract}
Reinforcement learning post-training has improved the reasoning ability of large language models, but often produces unnecessarily long, repetitive, or semantically opaque reasoning traces. Existing efficient reasoning methods mainly regulate response length through explicit budgets or length-aware rewards, leaving intermediate reasoning content weakly supervised. We propose CLORE, a content-level optimization framework that improves reasoning efficiency by editing correct on-policy rollouts. CLORE uses an external augmentation model to delete repetitive segments, illegible or task-irrelevant content, and superfluous reasoning after the solution is established, while preserving the final answer. The resulting augmented--original pairs are optimized with an auxiliary reference-free DPO objective alongside standard policy-gradient training. By restricting augmentation to correct trajectories and performing local deletion, CLORE keeps edited rollouts close to the policy distribution and mitigates off-policy mismatch. Experiments on DeepSeek-R1-Distill-Qwen-7B and Qwen2.5-Math-7B across five mathematical reasoning benchmarks show that CLORE improves the accuracy--efficiency trade-off and remains compatible with GRPO, DAPO, Training Efficient, and ThinkPrune. Content-level analyses further show that CLORE reduces repetitive reasoning, illegible content, and post-answer exploration, supporting content-level supervision as a complementary direction to length-level control.
\end{abstract}

\section{Introduction}

Large Language Models (LLMs) have recently achieved remarkable progress on tasks requiring highly structured, multi-step reasoning, including mathematical problem solving~\cite{ahn2024large} and competitive programming~\cite{el2025competitive}. Recent systems such as OpenAI o1~\cite{jaech2024openai} and DeepSeek-R1~\cite{guo2025deepseek} exemplify this trend by explicitly incentivizing models to produce extended Chain-of-Thought (CoT) traces at inference time~\cite{wei2022chain}. These traces commonly incorporate fine-grained intermediate computations, iterative self-verification, and exploration of alternative solution paths~\cite{lightman2023let,yao2023tree,wang2022self}. 

However, emerging evidence suggests that excessively long reasoning trajectories can be counterproductive. Beyond incurring substantial computational overhead, extended chains often exhibit \textit{overthinking}, where additional steps yield diminishing utility and may even degrade accuracy due to the accumulation of uninformative or illegible intermediate reasoning~\cite{kumar2025overthink,li2025thinking,wu2025more}. In particular, we identify three recurring patterns in lengthy reasoning traces, which can be systematically categorized into three types, as illustrated in Figure~\ref{fig:teaser}: \textbf{(1)} repetitive reasoning segments~\cite{jiang2025makes}, \textbf{(2)} uninterpretable or task-irrelevant content~\cite{jose2025reasoning}, and \textbf{(3)} superfluous exploration after the correct solution~\cite{peng2025revisiting,cuesta2025large}.\footnote{Detailed examples are provided in Appendix~\ref{sec:low_quality_reasoning_examples}.} These patterns characterize a form of redundant \emph{uninformative or illegible reasoning}, where additional steps are poorly structured, weakly interpretable, and minimally contributive to problem solving. Such reasoning not only reduces inference efficiency but also introduces noisy supervision signals in RL-based training, leading to unstable optimization dynamics and suboptimal policy updates~\cite{jose2025reasoning,yeo2025demystifying}. Consequently, removing or suppressing these low-quality reasoning components enables more concise and information-dense reasoning trajectories, improving both training and reasoning efficiency.

\begin{figure}
  \centering
  \makebox[\linewidth][c]{%
    \includegraphics[width=\linewidth]{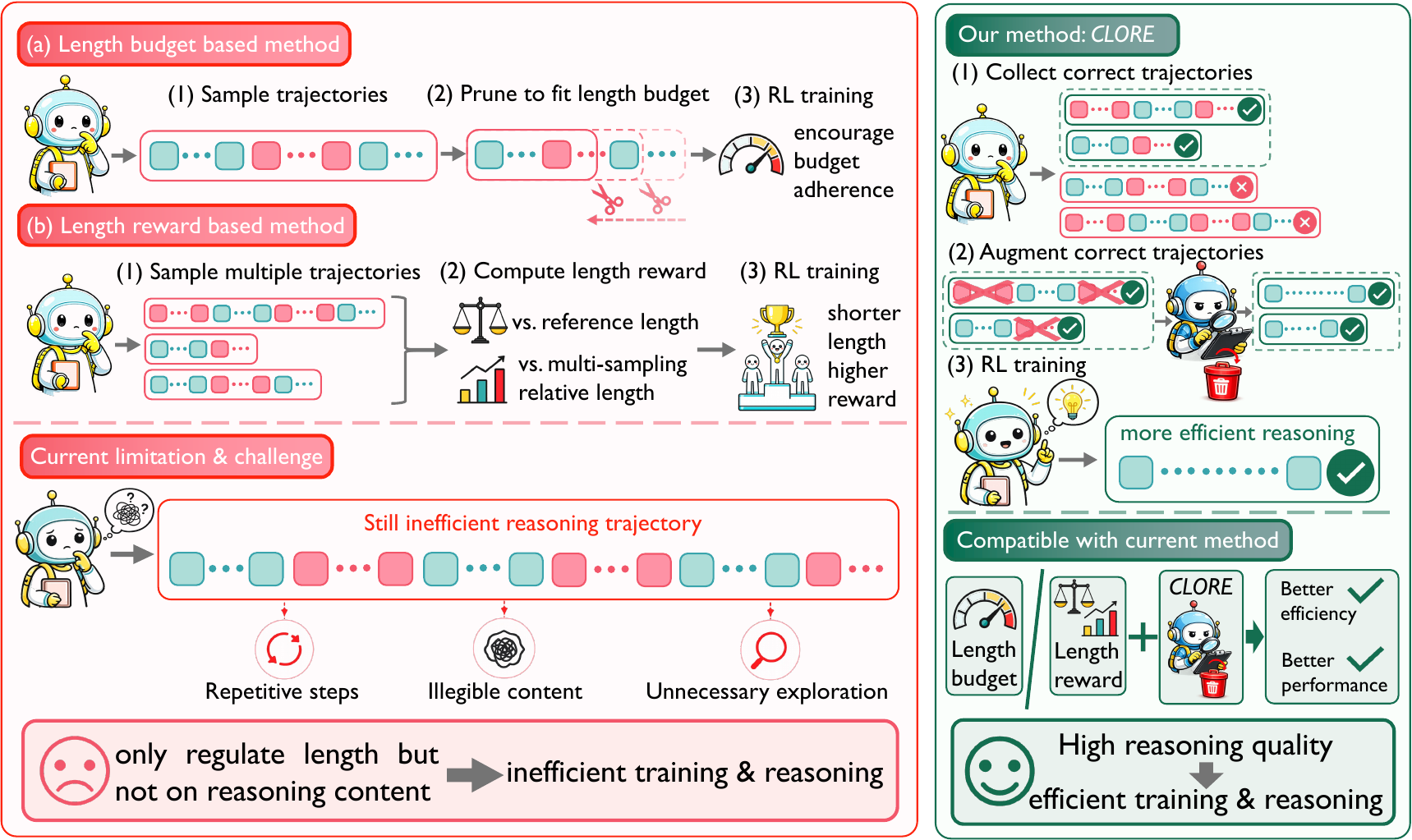}
  }
  \vspace{-0.25in}
  \caption{\textbf{Overview of CLORE.} Existing length-based efficient reasoning methods regulate response length but fail to remove low-quality reasoning content. CLORE improves efficient reasoning by augmenting correct trajectories to delete repetitive, illegible, and uninformative reasoning segments, while remaining compatible with existing efficient RL training methods.}
  \label{fig:teaser}
\vspace{-0.2in}
\end{figure}

Existing efforts to improve reasoning efficiency in LLMs largely focus on regulating the length of generated reasoning. This is typically achieved either by enforcing explicit length budgets~\cite{aggarwal2025l1,han2025token}, where a predefined or adaptive constraint limits the reasoning trajectory, or by designing length-based rewards that encourage shorter outputs through comparisons with reference generations or relative length in trajectory samples~\cite{liu2025learn,yuan2025efficient,dai2025stable,yi2025shorterbetter}. However, these approaches rely on coarse, sequence-level signals and primarily target reasoning length rather than the quality of intermediate reasoning content~\cite{wu2026art,jose2025reasoning}. As a result, they overlook opportunities to directly improve efficiency by supervising reasoning content itself, and lack mechanisms to identify and regularize low-utility, illegible, or non-contributory segments within reasoning traces.

To this end, we propose \textbf{CLORE} (\textbf{C}ontent-\textbf{L}evel \textbf{O}ptimization for \textbf{R}easoning \textbf{E}fficiency), a response-quality-aware method that improves reasoning efficiency by optimizing the content of generated rationales rather than relying solely on sequence-level length control. Our method improves policy-gradient-based reasoning training with LLM-based augmentation of correct trajectories and preference optimization over augmented-original pairs. The framework directly targets low-quality reasoning content, mitigates the off-policy issue introduced by augmented trajectories through Direct Preference Optimization (DPO)~\cite{rafailov2023direct}, and remains fully compatible with both generic policy-gradient algorithms and existing response length level efficient reasoning methods.

In summary, our contributions are as follows:
\begin{itemize}
    \item We propose CLORE, a content-level optimization framework for efficient reasoning. Instead of relying only on sequence-level length control, CLORE augments correct on-policy rollouts by deleting low-quality reasoning content and trains the policy with augmented--original preference pairs through an auxiliary reference-free DPO objective.

    \item We evaluate CLORE on DeepSeek-R1-Distill-Qwen-7B and Qwen2.5-Math-7B across five mathematical reasoning benchmarks, showing consistent accuracy--efficiency gains and compatibility with different length-level efficient reasoning RL training methods, including GRPO, DAPO, Training Efficient, and ThinkPrune.

    \item We provide detailed content-level reasoning analyses and ablation studies to explain the source of the efficiency gains. The analyses show that CLORE reduces repetitive reasoning, illegible or task-irrelevant content, and superfluous post-answer exploration, while the ablations validate the effects of the DPO weight and the robustness on augmentation model.
\end{itemize}

\section{Preliminaries}

Let $x$ denote an input problem sampled from a data distribution $\mathcal{D}$, and let a reasoning policy $\pi_\theta$ generate a trajectory $\tau=(y_1,\ldots,y_T)\sim\pi_\theta(\cdot\mid x)$. The trajectory contains both intermediate reasoning tokens and a final answer. In RL-based reasoning training, the policy is commonly optimized to maximize the expected return $\max_{\theta}\; \mathbb{E}_{x \sim \mathcal{D}} \, \mathbb{E}_{\tau \sim \pi_\theta(\cdot \mid x)} \left[ R(x,\tau) \right]$ using a task reward $R(x,\tau)$, where the reward is primarily determined by final-answer correctness. Although such training can improve reasoning performance, it does not supervise the quality of intermediate reasoning content. As a result, correct trajectories may still contain redundant, illegible, or non-contributory segments. 

Existing efficient reasoning methods mainly regulate the total amount of generated reasoning through length budgets or length-aware rewards. However, reducing length alone does not explicitly distinguish high-quality reasoning from repetitive task-irrelevant or uninterpretable content. We therefore study the problem of \textit{effectively removing uninformative and illegible reasoning content during RL training}, aiming to mitigate overthinking, improve reasoning efficiency, and enhance training efficiency without sacrificing model performance.

\section{Methodology}

\subsection{Problem Formulation}

In this work, we focus on three recurrent forms of low-quality reasoning that frequently appear in otherwise correct long-form trajectories:
\begin{enumerate}[leftmargin=*]
    \vspace{-0.05in}
    \item \textbf{Repetitive reasoning segments}, which redundantly revisit equivalent intermediate states without introducing new problem-solving progress;
    \item \textbf{Illegible or task-irrelevant content}, which weakens semantic coherence and reduces the interpretability of the reasoning process;
    \item \textbf{Superfluous exploration after the correct solution}, which continues the reasoning trajectory despite the final answer already being determined.
    \vspace{-0.05in}
\end{enumerate}
These patterns lengthen responses, reduce interpretability, and introduce noisy training signals. Under policy-gradient optimization, all generated tokens contribute to the update. As a result, non-contributory segments may receive credit alongside useful reasoning, reinforcing inefficient behaviors and destabilizing training process.

A natural approach is to remove low-quality reasoning and train the policy to prefer the augmented trajectory. However, because the augmented trajectory is not sampled from the current policy, it introduces an off-policy distribution mismatch. Our goal is therefore to suppress such content while improving reasoning efficiency and training efficiency in a way that remains compatible with general policy-gradient and length-based efficient reasoning methods.

\begin{figure}[t]
  \centering
  \makebox[\linewidth][c]{%
    \includegraphics[width=\linewidth]{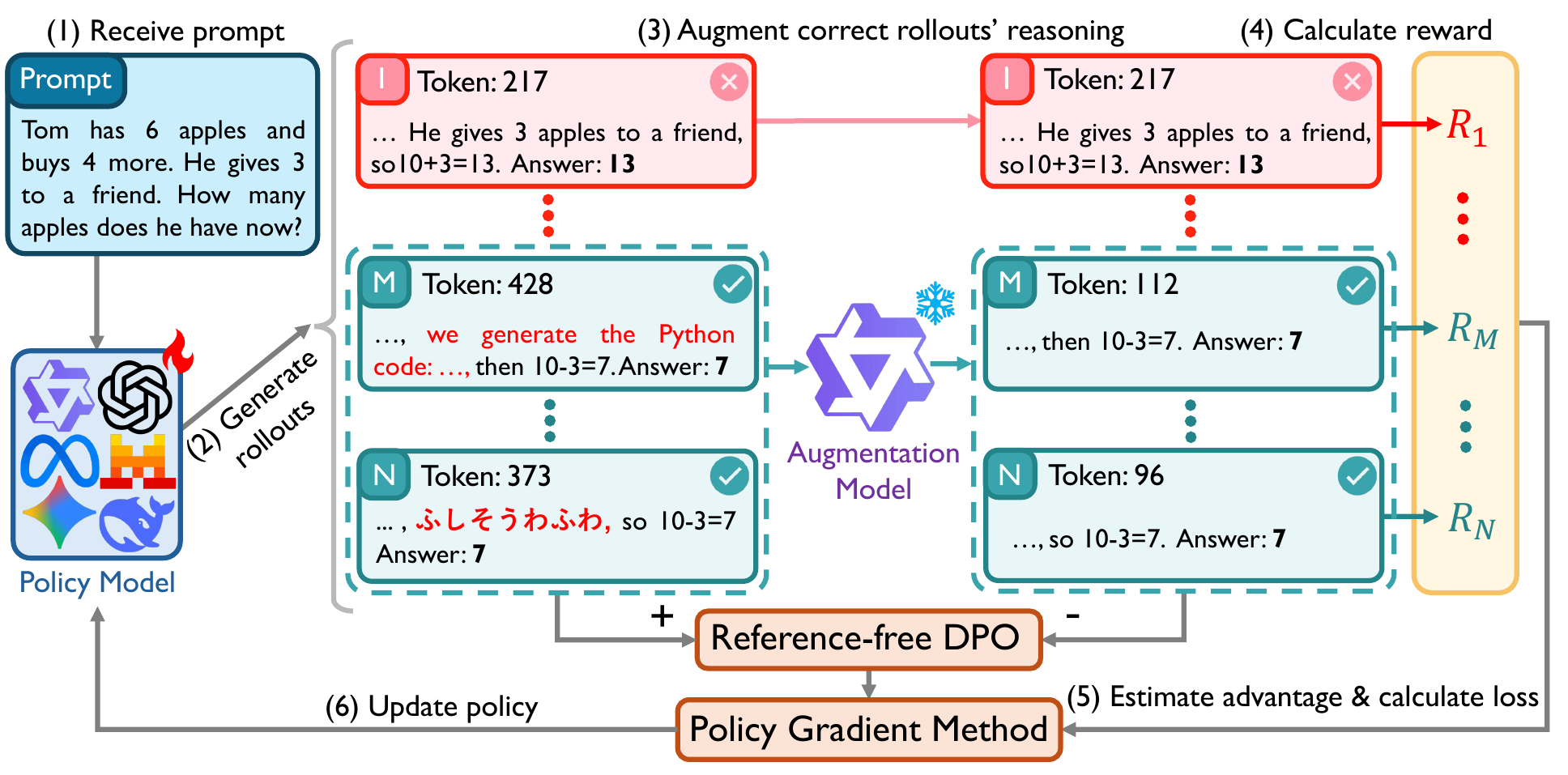}
  }
  \vspace{-0.25in}
  \caption{\textbf{Overview of the CLORE framework.} The policy model samples multiple reasoning trajectories, the correct ones are augmented to remove low-quality content, and the resulting preference pairs are optimized via a reference-free DPO objective jointly with policy-gradient training.}
  \label{fig:overview}
\vspace{-0.1in}
\end{figure}

\subsection{Method}

Our framework augments standard policy-gradient-based RL training with offline-aware supervision on augmented rollouts. As shown in Figure~\ref{fig:overview}, it uses LLM as augmentation model to remove low-quality reasoning content and then leverages the augmented trajectories as training signals to improve reasoning efficiency, while remaining complementary to response length level control.

\textbf{Rollout Generation.} Given a prompt $x$, we sample $N$ trajectories $\{\tau_i\}_{i=1}^{N}$ from the current policy $\pi_\theta(\cdot \mid x)$ and compute their rewards $R(x,\tau_i)$. Let $\mathcal{T}^{\mathrm{cor}}(x)=\{\tau_i \mid R(x,\tau_i)=1\}$ denote the subset of correct trajectories. We restrict augmentation to correct trajectories because they generally contain less low-quality reasoning than incorrect ones and therefore require fewer edits. Consequently, the augmented trajectories remain closer to the original policy samples, which helps mitigate the off-policy mismatch introduced by post-hoc augmentation. 

\textbf{Reasoning Augmentation.}
We apply augmentation only to correct trajectories. For each $\tau \in \mathcal{T}^{\mathrm{cor}}(x)$, an external augmentation model $\mathcal{E}$ removes low-quality reasoning content while preserving the correct solution. The editor is instructed to delete repetitive reasoning spans, illegible or task-irrelevant content, and superfluous reasoning that appears after the solution is already determined. Let $\widetilde{\mathcal{T}}^{\mathrm{cor}}(x)$ denote the union of edited and unedited correct responses, each $\tilde{\tau} \in \widetilde{\mathcal{T}}^{\mathrm{cor}}(x)$ is either an edited trajectory $\mathcal{E}(x,\tau)$ or the original trajectory $\tau$ when no valid modification is produced. For edited trajectories, $\tilde{\tau}$ is expected to be a shorter, more legible, and more information-dense version of $\tau$ while maintaining the same final answer, which induces a natural preference pair $\tilde{\tau} \succ \tau$ on efficient reasoning objective based on reasoning quality rather than answer correctness. To prevent invalid edits from introducing noisy supervision, we retain the pair only if $\tilde{\tau}$ passes the correctness consistency check ${R}(x,\tilde{\tau}) \ge {R}(x,\tau)$. This filtering step ensures that the induced preference reflects improved reasoning quality rather than accidental degradation of solution validity.

\textbf{On-policy Trajectory Preference Learning.} However, since the augmented trajectory \( \tilde{\tau} \) is obtained via post-hoc editing rather than direct sampling from the current policy \( \pi_\theta \), it does not correspond to a standard on-policy rollout for policy-gradient optimization. Instead, it is treated as a derived trajectory that remains locally close to the original on-policy sample \( \tau \) while not strictly following the generative distribution of \( \pi_\theta \). To incorporate supervision from such augmented data, we optimize each \((\tau, \tilde{\tau})\) pair using a reference-free DPO-style objective \( \mathcal{L}_{\mathrm{DPO}}(x,\tau,\tilde{\tau}) = - \log \sigma \!\left( \beta \left[ \log \pi_\theta(\tau\mid x) - \log \pi_\theta(\tilde{\tau}\mid x) \right] \right) \), as introduced in \cite{rafailov2023direct}, and apply it alongside an on-policy optimization regime in which the base trajectories \( \tau \) are sampled from \( \pi_\theta \). In contrast to standard DPO formulations, we do not employ an explicit reference policy. This is because \( \tilde{\tau} \) is constructed by locally deleting low-quality reasoning segments from \( \tau \), which preserves structural and distributional proximity to the original rollout and implicitly constrains deviation from the current policy. This enables stable preference optimization without requiring an additional reference model, while providing a lightweight mechanism for integrating augmentation signal into on-policy training.

\textbf{Compatibility with Policy Gradient Methods.} Our framework is agnostic to the specific policy-gradient backbone. Let $\mathcal{L}_{\mathrm{PG}}$ denote the underlying RL loss used to optimize $\pi_\theta$. In a standard advantage-weighted formulation, it takes the form $\mathcal{L}_{\mathrm{PG}}=-\mathbb{E}_{x\sim\mathcal{D}}\,\mathbb{E}_{\tilde{\tau}\sim\pi_\theta(\cdot\mid x)}
\left[A(x,\tilde{\tau})\log\pi_\theta(\tilde{\tau}\mid x)\right]$, where $A(x,\tilde{\tau})$ denotes the advantage estimation. The final training objective is
$\mathcal{L}=\mathcal{L}_{\mathrm{PG}}+\lambda \, \mathcal{L}_{\mathrm{DPO}}$,  
where $\lambda \ge 0$ controls the strength of the augmentation-induced preference signal, and $\mathcal{L}_{\mathrm{DPO}}$ is averaged over all augmented--original pairs. More generally, $\mathcal{L}_{\mathrm{PG}}$ may be instantiated as PPO-style clipping~\cite{schulman2017proximal,ouyang2022training}, REINFORCE-style updates~\cite{ahmadian2024back,hu2025reinforce++}, GRPO-style group-relative objectives~\cite{shao2024deepseekmath,yu2025dapo}, or other policy-gradient variants. Our method does not alter the backbone optimizer itself, but adds an auxiliary preference-based supervision term on top of it.

\textbf{Compatibility with Length-based Efficient Reasoning Methods.} If the base training objective already includes an additional efficiency-oriented term $\mathcal{L}_{\mathrm{len}}$\footnote{Here, $\mathcal{L}_{\mathrm{len}}$ is a simplified abstraction of length-based method supervision objective.}, induced by a length budget, a length reward, or a sampling-based length regularizer, our method can be incorporated without modification as
$\mathcal{L}=\mathcal{L}_{\mathrm{PG}}+\mathcal{L}_{\mathrm{len}}+\lambda \, \mathcal{L}_{\mathrm{DPO}}$.
This decomposition reflects the complementarity of the two signals. Length-based objectives regulate the overall amount of reasoning at the trajectory length level, whereas our augmentation objective regulates reasoning quality at the trajectory content level. As a result, the proposed framework can strengthen existing efficient reasoning methods by removing low-quality content that response length level control alone cannot reliably identify, while also reducing credit assignment to non-contributory tokens and improving optimization efficiency.

\vspace{-0.1in}
\section{Experimental Setup}
\vspace{-0.1in}

\textbf{Models and Datasets.} We adopt DeepSeek-R1-Distill-Qwen-7B~\cite{guo2025deepseek} and Qwen2.5-Math-7B~\cite{yang2024qwen2} as the base reasoning models, and employ Qwen3-4B-Instruct-2507~\cite{yang2025qwen3} as the augmentation model. For clarity, we denote the resulting model after training as \textsc{CLORE} in the subsequent sections.

We fine-tune the base model on DAPO-Math-17K~\cite{yu2025dapo}, a curated collection of 17K mathematical problems spanning diverse difficulty levels and problem types. The dataset primarily focuses on competition-style mathematics, providing coverage from advanced high-school problems to Olympiad-level challenges. For evaluation, we assess generalization on a suite of challenging mathematical benchmarks. Specifically, we report results on OlympiadBench~\cite{he2024olympiadbench}, Minerva~\cite{lewkowycz2022solving}, and MATH500~\cite{hendrycks2021measuring}, which emphasize complex, multi-step reasoning, as well as competition-oriented datasets including AMC2023 and AIME2025. These benchmarks collectively test out-of-domain mathematical reasoning robustness across varying levels of difficulty and problem styles.

\textbf{Baselines.} To assess the efficiency of our method and its compatibility with existing length-based efficient reasoning methods, we compare the following baselines and apply our method to them:

\begin{itemize}[leftmargin=*]
\vspace{-0.05in}
\item \textbf{GRPO}~\cite{shao2024deepseekmath}: A baseline without any explicit efficient reasoning mechanism.
\item \textbf{ThinkPrune}~\cite{hou2025thinkprune}: A pruning-based efficient reasoning method that enforces a predefined \textit{pruning-based length budget} by explicitly removing reasoning steps that exceed the budget.
\item \textbf{DAPO with Soft Overlong Punishment}~\cite{yu2025dapo}: A scalable RL framework that incorporates \textit{budget-based length rewards} into policy optimization.
\item \textbf{Training Efficient}~\cite{arora2025training}: A \textit{multi-sampling-based length reward} approach that derives length-aware supervision from multiple sampled reasoning trajectories.
\vspace{-0.05in}
\end{itemize}

\textbf{Evaluation.} For evaluation, we generate a single response for each problem instance. For each benchmark, we report three metrics. \textit{Accuracy} is defined as the fraction of correctly answered instances over the total number of evaluated problems. \textit{Output length} measures generation cost, computed as the average number of tokens produced per instance across the benchmark. We also report the \textit{Accuracy-Efficiency (AE) Score}~\cite{luo2025o1}, a composite metric that measures the trade-off between output compression and accuracy retention. A higher AE score denotes better reasoning efficiency, indicating that response length is reduced while preserving correctness more effectively. More detailed implementation details can be found in Appendix~\ref{app:val_setup}

\section{Research Questions and Empirical Analysis}
\subsection{Does CLORE Improve the Accuracy--Efficiency Trade-off?}

\begin{table*}[t]
\centering
\scriptsize
\setlength{\tabcolsep}{3pt}
\renewcommand{\arraystretch}{1.1}
\resizebox{0.97\textwidth}{!}{%
\begin{tabular}{@{}l ccc @{\hspace{8pt}} ccc @{\hspace{8pt}} ccc @{\hspace{8pt}} ccc @{\hspace{8pt}} ccc@{}}
\toprule
& \multicolumn{3}{c}{OlympiadBench}
& \multicolumn{3}{c}{Minerva}
& \multicolumn{3}{c}{MATH500}
& \multicolumn{3}{c}{AMC2023}
& \multicolumn{3}{c}{AIME2025} \\
\cmidrule(lr){2-4}
\cmidrule(lr){5-7}
\cmidrule(lr){8-10}
\cmidrule(lr){11-13}
\cmidrule(lr){14-16}
& Acc. & Len. & AE
& Acc. & Len. & AE
& Acc. & Len. & AE
& Acc. & Len. & AE
& Acc. & Len. & AE\\
\midrule
\multicolumn{16}{@{}l}{DeepSeek-R1-Distill-Qwen-7B} \\
\midrule
Base model (w/o training)
& 38.0 & 5007 & -
& 24.6 & 2958 & -
& 73.0 & 2600 & -
& 75.9 & 2751 & -
& 29.4 & 5740 & - \\
GRPO
& 42.1 & 4200 & 0.32
& \cellcolor{second}29.0 & 2066 & 1.21
& 74.8 & 2138 & 0.32
& 81.9 & 2171 & 0.55
& \cellcolor{second}33.0 & 4831 & 0.57 \\
GRPO\textbf{+CLORE}
& \cellcolor{best}\textbf{43.2} & 3245 & 0.76
& 26.9 & 1629 & 1.38
& \cellcolor{best}\textbf{76.8} & 1514 & 0.92
& 82.7 & 1828 & 0.86
& 32.1 & 4084 & 0.81 \\
DAPO
& \cellcolor{second}42.9 & 3333 & 0.70
& \cellcolor{second}29.0 & 1528 & 1.99
& \cellcolor{best}\textbf{76.8} & 1694 & 0.71
& 82.6 & \cellcolor{second}1808 & \cellcolor{second}0.88
& \cellcolor{best}\textbf{33.7} & 3941 & \cellcolor{second}1.02 \\
DAPO\textbf{+CLORE}
& \cellcolor{second}42.9 & \cellcolor{best}\textbf{2564} & \cellcolor{best}\textbf{1.21}
& 26.5 & \cellcolor{best}\textbf{1255} & \cellcolor{best}\textbf{2.32}
& 74.8 & \cellcolor{best}\textbf{1231} & \cellcolor{best}\textbf{1.30}
& 82.3 & \cellcolor{best}\textbf{1526} & \cellcolor{best}\textbf{1.21}
& 30.0 & \cellcolor{best}\textbf{3256} & \cellcolor{best}\textbf{1.12} \\
Training Efficient
& 42.8 & 3802 & 0.49
& 25.7 & 1862 & 1.17
& 75.2 & 1879 & 0.51
& \cellcolor{best}\textbf{83.8} & 2077 & 0.66
& 31.2 & 4459 & 0.61 \\
Training Efficient\textbf{+CLORE}
& 41.5 & 3079 & 0.78
& 26.5 & 1657 & 1.52
& 74.6 & 1472 & 0.92
& \cellcolor{second}83.5 & 2229 & 0.54
& 32.0 & 4761 & 0.55 \\
ThinkPrune
& 39.7 & 3694 & 0.42
& 26.8 & 1791 & 1.35
& \cellcolor{second}75.8 & 1829 & 0.57
& 80.0 & 1950 & 0.68
& 29.0 & 4353 & 0.54 \\
ThinkPrune\textbf{+CLORE}
& 42.8 & \cellcolor{second}3002 & \cellcolor{second}0.88
& \cellcolor{best}\textbf{29.4} & \cellcolor{second}1408 & \cellcolor{second}2.29
& 74.0 & \cellcolor{second}1394 & \cellcolor{second}1.01
& 82.7 & 1817 & 0.87
& 31.7 & \cellcolor{second}3887 & 0.88 \\

\midrule
\multicolumn{16}{@{}l}{Qwen2.5-Math-7B} \\
\midrule
Base model (w/o training)
& 16.1 & 1116 & -
& 5.5 & 850 & -
& 39.8 & 958 & -
& 54.2 & 571 & -
& 3.3 & 824 & - \\
GRPO
& 32.5 & 1012 & 1.23
& 25.7 & 1199 & 2.31
& 68.0 & 602 & 1.72
& \cellcolor{best}\textbf{76.5} & 897 & -0.10
& \cellcolor{best}\textbf{16.5} & 1104 & 2.73 \\
GRPO\textbf{+CLORE}
& \cellcolor{best}\textbf{38.3} & 631 & 3.21
& \cellcolor{second}27.2 & 542 & 6.76
& \cellcolor{best}\textbf{74.2} & 514 & 2.47
& 73.4 & 661 & 0.17
& \cellcolor{second}15.2 & 637 & \cellcolor{best}\textbf{4.67} \\
DAPO
& 28.5 & 631 & 2.13
& 15.4 & 555 & 3.29
& 59.6 & 500 & 1.87
& 56.1 & 660 & -0.10
& 7.0 & 680 & 1.57 \\
DAPO\textbf{+CLORE}
& 35.7 & 604 & 3.10
& 25.4 & 513 & 6.65
& \cellcolor{second}73.6 & 494 & 2.59
& \cellcolor{second}75.9 & 588 & 0.36
& 9.5 & 640 & 2.71 \\
Training Efficient
& 32.5 & \cellcolor{second}358 & \cellcolor{second}5.29
& 21.7 & \cellcolor{second}314 & \cellcolor{second}9.68
& 64.4 & \cellcolor{second}280 & \cellcolor{second}4.54
& 63.8 & \cellcolor{second}372 & \cellcolor{second}0.81
& 9.4 & \cellcolor{second}411 & \cellcolor{second}3.50 \\
Training Efficient\textbf{+CLORE}
& 35.3 & \cellcolor{best}\textbf{262} & \cellcolor{best}\textbf{8.33}
& 22.1 & \cellcolor{best}\textbf{199} & \cellcolor{best} \textbf{16.16}
& 58.8 & \cellcolor{best}\textbf{225} & \cellcolor{best}\textbf{5.29}
& 64.5 & \cellcolor{best}\textbf{204} & \cellcolor{best}\textbf{2.33}
& 6.2 & \cellcolor{best}\textbf{395} & 2.92 \\
ThinkPrune
& \cellcolor{second}36.4 & 820 & 2.08
& \cellcolor{best}\textbf{27.6} & 667 & 5.39
& \cellcolor{best}\textbf{74.2} & 646 & 1.76
& 72.1 & 767 & -0.01
& 10.0 & 873 & 1.86 \\
ThinkPrune\textbf{+CLORE}
& 34.2 & 560 & 3.23
& \cellcolor{second}27.2 & 569 & 6.39
& 72.4 & 484 & 2.60
& 72.8 & 566 & 0.36
& 9.8 & 556 & 3.40 \\
\bottomrule
\end{tabular}%
}
\vspace{-0.05in}
\caption{\textbf{Evaluation result with DeepSeek-R1-Distill-Qwen-7B and Qwen2.5-Math-7B.} For each model, we report accuracy, length, and AE score for CLORE and its combinations with efficient reasoning methods. The \colorbox{best}{\textbf{best}} and \colorbox{second}{second-best} results are highlighted in each model and metric.}
\label{tab:eval}
\vspace{-0.2in}
\end{table*}

Tables~\ref{tab:eval} show that applying \textsc{CLORE} generally improves AE scores across both DeepSeek-R1-Distill-Qwen-7B and Qwen2.5-Math-7B. The improvement is consistent across different optimization methods, including GRPO, DAPO, Training Efficient, and ThinkPrune. On DeepSeek-R1-Distill-Qwen-7B, applying \textsc{CLORE} improves the average AE score of most baselines by around $0.4$ across the five benchmarks, with the largest gain appearing when \textsc{CLORE} is applied to ThinkPrune on Minerva. On Qwen2.5-Math-7B, the gains are more pronounced, with average AE improvements mostly exceeding $1.0$ for GRPO, DAPO, and ThinkPrune. The largest improvement is observed when applying \textsc{CLORE} to GRPO on Minerva, where the AE score increases by more than $6$ points. These results suggest that \textsc{CLORE} provides a general content-level efficiency signal rather than a model-sensitive or method-specific tuning effect.

Another consistent pattern observed in Table~\ref{tab:eval} is that applying \textsc{CLORE} substantially reduces output length while maintaining competitive accuracy. On DeepSeek-R1-Distill-Qwen-7B, applying \textsc{CLORE} to GRPO, DAPO, and ThinkPrune reduces average response length by roughly $20$--$30\%$ across most benchmarks, while still improving AE on nearly all paired comparisons. On Qwen2.5-Math-7B, the reasoning length reduction effect is even stronger, with length reductions often around $30$--$50\%$ when \textsc{CLORE} is applied to GRPO. Since these length reductions do not generally lead to systematic accuracy collapse, the results indicate that \textsc{CLORE} is not merely imposing a shorter reasoning length. Instead, it improves the density of reasoning by removing repetitive, illegible, or non-contributory segments from reasoning trajectories.

\begin{figure}[t]
  \centering
  \makebox[\linewidth][c]{%
    \includegraphics[width=\linewidth]{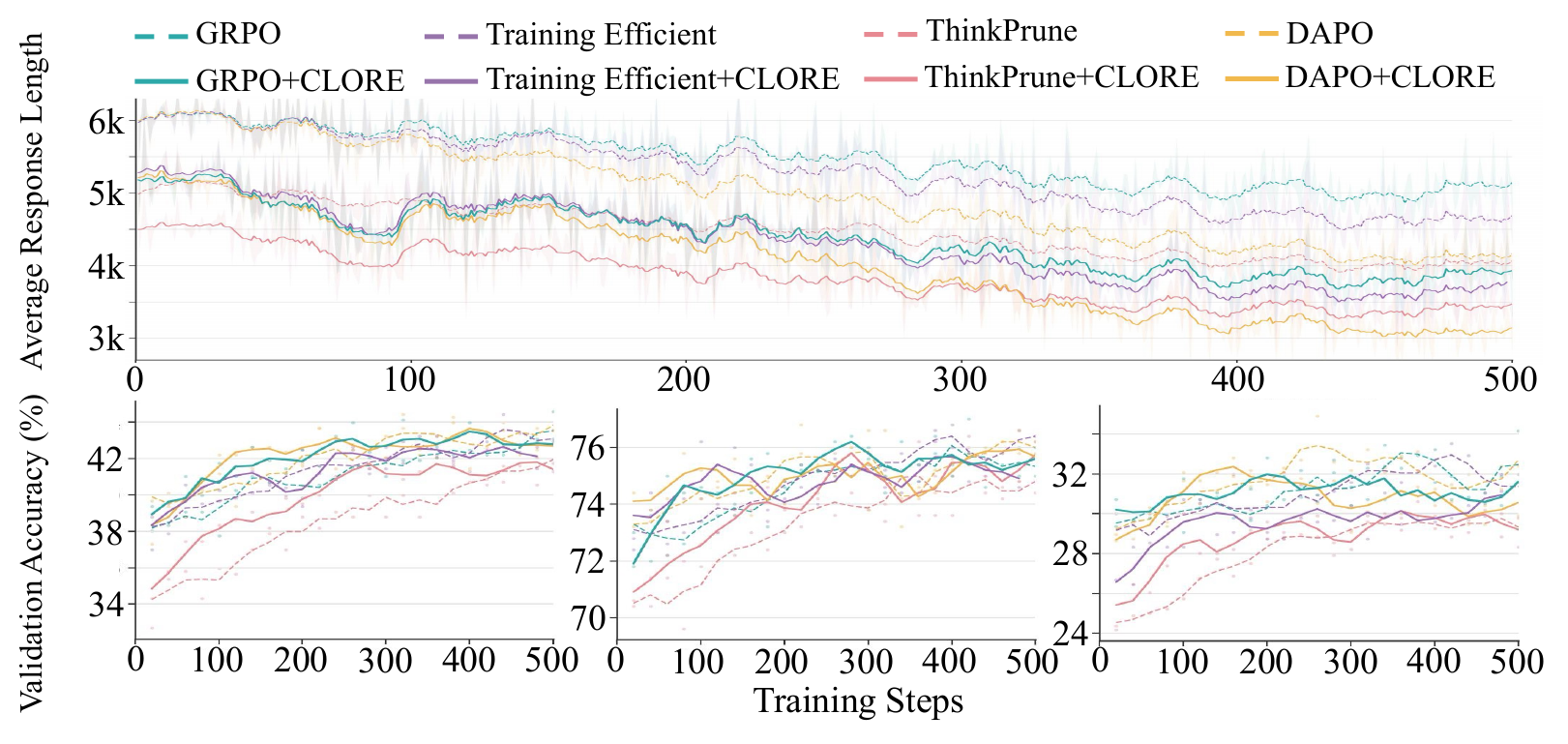}
  }
  \vspace{-0.3in}
  \caption{\textbf{DeepSeek-R1-Distill-Qwen-7B} training dynamics, with average response length on  DAPO-Math-17K and validation accuracy on OlympiadBench, MATH500, AMC2023 (left to right).}
  \label{fig:result_deepseek}
  \vspace{-0.2in}
\end{figure}

\begin{figure}[t]
  \centering
  \makebox[\linewidth][c]{%
    \includegraphics[width=\linewidth]{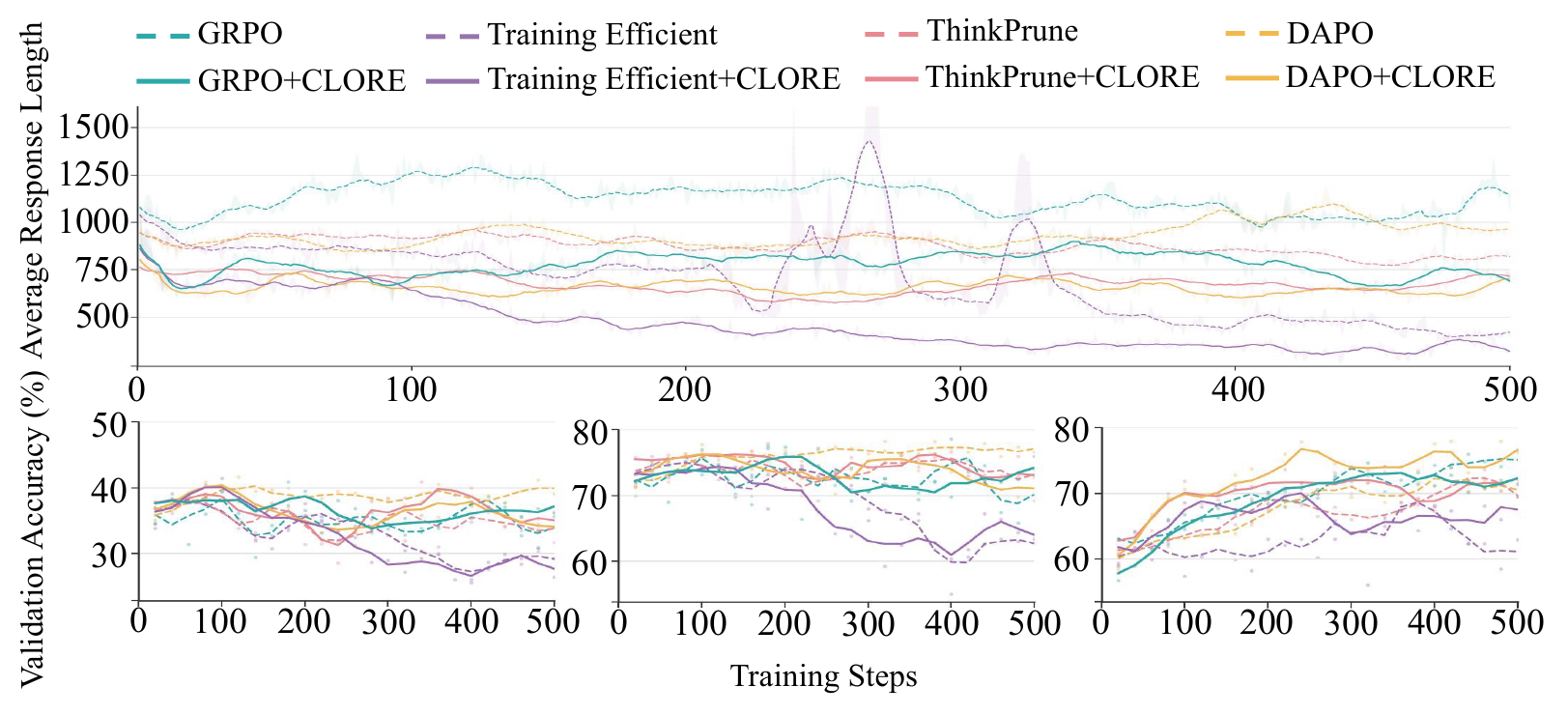}
  }
  \vspace{-0.3in}
  \caption{\textbf{Qwen2.5-Math-7B} training dynamics, with average response length on  DAPO-Math-17K and validation accuracy on OlympiadBench, MATH500 and AMC2023 (left to right).}
  \label{fig:result_qwen}
  \vspace{-0.2in}
\end{figure}

\subsection{Is Content-Level Optimization Complementary to Length-Level Control?}
The evaluation results in Table~\ref{tab:eval} show that efficient reasoning is not equivalent to minimizing response length alone. Although \textsc{CLORE} improves efficiency in most settings, the accuracy--efficiency frontier remains visible. In particular, applying \textsc{CLORE} to Training Efficient on Qwen2.5-Math-7B produces the shortest responses across all benchmarks, but it does not always achieve the highest accuracy. On MATH500, for example, this setting yields much shorter responses but noticeably lower accuracy than applying \textsc{CLORE} to GRPO or DAPO. This suggests that overly aggressive compression can remove reasoning steps that remain useful for correctness. Therefore, the goal of efficient reasoning should not be unconditional length minimization, but the preservation of high-quality reasoning while suppressing redundant or low-quality content.

This observation explains why \textsc{CLORE} is most effective when combined with existing efficient reasoning objectives rather than used as a pure compression mechanism. The strongest AE results are usually obtained when \textsc{CLORE} is applied on top of length-level methods. On DeepSeek-R1-Distill-Qwen-7B, applying \textsc{CLORE} to DAPO yields the best AE score on most benchmarks, showing that content-level supervision can strengthen a length-reward-based RL objective. Applying \textsc{CLORE} to ThinkPrune also improves AE across all five benchmarks, indicating that the proposed method remains useful even when length-budget-based explicit pruning is already used. On Qwen2.5-Math-7B, applying \textsc{CLORE} to Training Efficient produces the shortest outputs and the best AE score on most benchmarks, while applying \textsc{CLORE} to DAPO gives a more balanced accuracy--efficiency trade-off. These trends support the intended complementarity: length-level objectives quantitatively regulate how much reasoning is generated, whereas \textsc{CLORE} qualitatively supervises which reasoning content is informative and worth retaining.

Figures~\ref{fig:result_deepseek} and~\ref{fig:result_qwen} further show that this complementarity emerges during training, not only at final evaluation. Across different training settings, models trained with \textsc{CLORE} tend to maintain shorter response lengths while keeping validation accuracy stable. This indicates that \textsc{CLORE} does not merely post-process outputs after generation. Instead, the augmented--original preference pairs provide a direct training signal that shifts the policy toward generating more concise and information-dense reasoning trajectories. Overall, the results support the central claim that efficient reasoning benefits from complementary optimization of both response length and reasoning content quality.

\vspace{-0.051in}
\subsection{How Do the DPO Weight and Augmentation Model Affect CLORE Performance?}
To answer the question, we use DeepSeek-R1-Distill-Qwen-7B training with GRPO as the base setting, and perform ablation on DPO loss weight and augmentation model.

\begin{figure}[h]
  \centering
  \includegraphics[width=\linewidth]{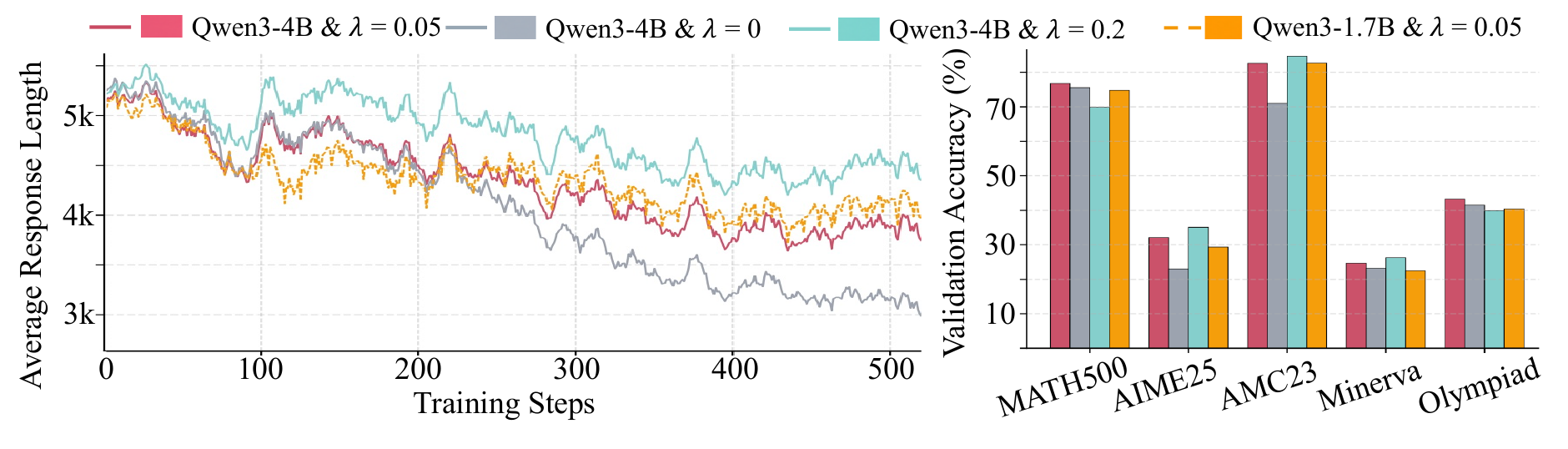}
  \vspace{-0.35in}
  \caption{Ablation study on the DPO weight and augmentation model.}
  \label{fig:ablation}
  \vspace{-0.1in}
\end{figure}

\textbf{Ablation on DPO Weight.}
We ablate the auxiliary preference weight by varying the DPO loss weight $\lambda$. As shown in Figure~\ref{fig:ablation}, removing the DPO term yields the strongest length reduction, but leads to lower validation accuracy on most benchmarks. Increasing the weight to $0.2$ maintains longer responses and does not provide consistent additional accuracy improvement over $0.05$. These results indicate that a moderate DPO weight provides a better accuracy--length trade-off.

\textbf{Ablation on Augmentation Model.}
We further replace the default Qwen3-4B augmentation model with Qwen3-1.7B. As shown in Figure~\ref{fig:ablation}, the smaller augmentation model remains effective, producing similar training dynamics and competitive validation accuracy. Still, Qwen3-4B yields slightly better results, suggesting that CLORE is not critically dependent on a large augmentation model, while a stronger editor can generate cleaner deletion-based preference pairs.

\vspace{-0.1in}
\subsection{Does CLORE Really Suppress Low-Quality Reasoning Content?}
\vspace{-0.05in}
To answer the question, we perform a content-level reasoning analysis using Qwen2.5-Math-7B, where CLORE shows more pronounced accuracy--efficiency improvements, and report metrics averaged across validation benchmarks.

\textbf{Augmentation Analysis.} We analyze the token-level behavior of the augmentation model during
\begin{wrapfigure}{r}{0.34\textwidth}
  \vspace{-0.15in}
  \centering
  \includegraphics[width=0.34\textwidth]{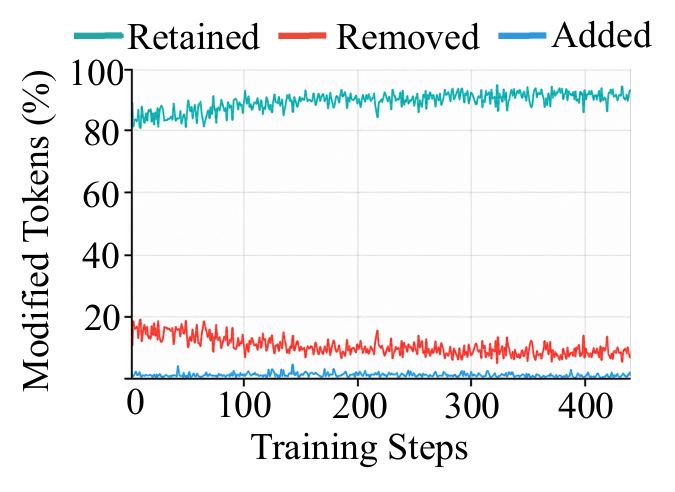}
  \vspace{-0.35in}
  \caption{\small Augmentation analysis.}
  \label{fig:augmentation_analysis}
  \vspace{-0.15in}
\end{wrapfigure}
 training in Figure~\ref{fig:augmentation_analysis}, with examples in Appendix~\ref{app:augmentation_comparison}. Retained tokens dominate the augmented trajectories, removed tokens remain a smaller but persistent fraction, and added tokens are consistently negligible. This indicates that CLORE performs conservative, deletion-based augmentation rather than aggressive rewriting, preserving the structure of correct on-policy 
rollouts while selectively removing low-quality reasoning content. The increasing retention and decreasing removal trends further suggest that augmented rollouts remain close to the current policy distribution, providing a stable content-level supervision.

\textbf{Repetitive Reasoning Analysis.}
As shown in Figure~\ref{fig:repetition_analysis}, we examine whether the proposed content-
\begin{wrapfigure}{r}{0.4\textwidth}
  \vspace{-0.2in}
  \centering
  \includegraphics[width=0.4\textwidth]{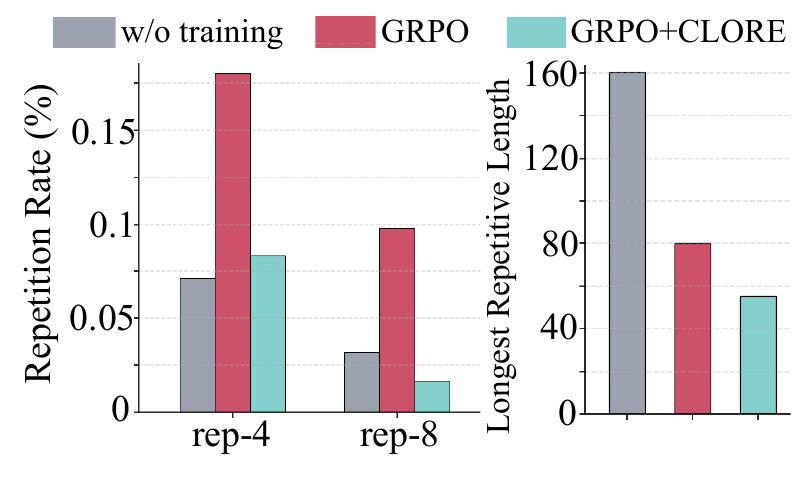}
  \vspace{-0.33in}
  \caption{\small Repetitive reasoning analysis.}
  \label{fig:repetition_analysis}
  \vspace{-0.15in}
\end{wrapfigure}
level supervision reduces repetitive reasoning patterns. We compare the base model, GRPO, and GRPO with CLORE using repetition metrics rep-4, rep-8, and the longest repeated span~\cite{welleck2019neural}. GRPO increases repetition relative to the base model, suggesting that rewards alone may reinforce redundant reasoning during RL training. In contrast, GRPO+CLORE reduces both local $n$-gram repetition and long repeated spans, indicating that CLORE improves reasoning structure by removing repetitive content rather than merely condensing outputs.

\textbf{Post-answer Reasoning Analysis.}
Figure~\ref{fig:post-answer_analysis} evaluates superfluous exploration after the correct 
\begin{wrapfigure}{r}{0.4\textwidth}
\vspace{-0.15in}
\centering
\includegraphics[width=0.4\textwidth]{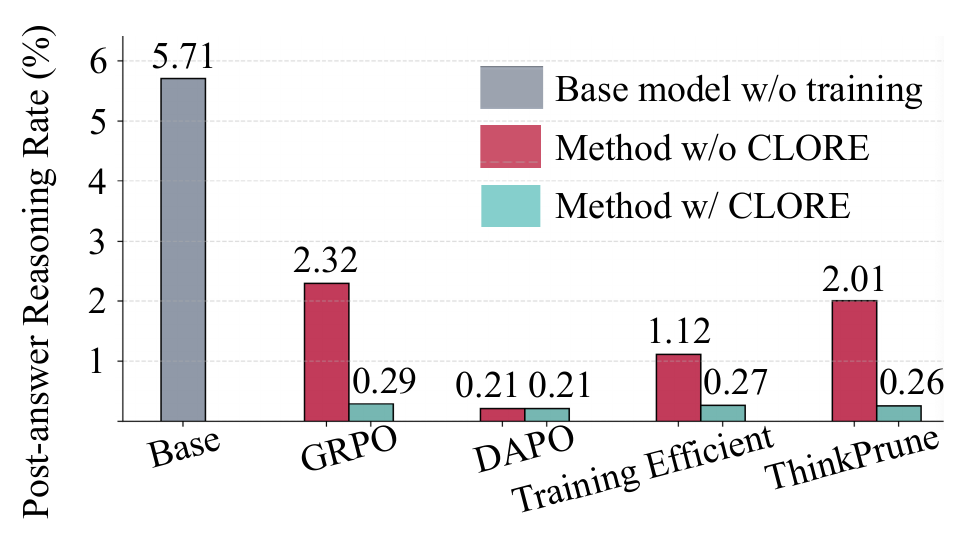}
\vspace{-0.36in}
\caption{\small Post-answer reasoning analysis.}
\label{fig:post-answer_analysis}
\vspace{-0.15in}
\end{wrapfigure}
solution, measured as the fraction of generated reasoning that appears after the final answer has been reached, computed as the average percentage of post-answer tokens in responses. The base model frequently continues reasoning after reaching the solution, while RL training only partially reduces this post-answer content. Adding CLORE consistently suppresses this behavior across training objectives, suggesting that its content-level signal encourages the model to stop once the solution is established.

\textbf{Illegible Reasoning Analysis.}
The results in Figure~\ref{fig:illegible_analysis} provide a complementary view of reasoning 
\begin{wrapfigure}{r}{0.4\textwidth}
\vspace{-0.15in}
\centering
\includegraphics[width=0.4\textwidth]{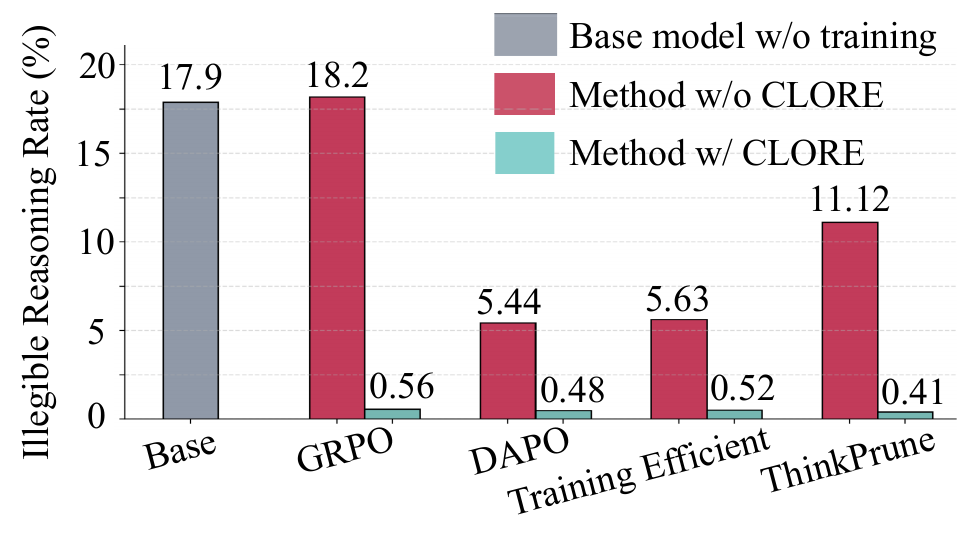}
\vspace{-0.35in}
\caption{\small Illegible reasoning analysis.}
\label{fig:illegible_analysis}
\vspace{-0.15in}
\end{wrapfigure}
quality by measuring the presence of uninterpretable or task-irrelevant content through averaged LLM-as-a-judge result over various strong commercial models\footnote{Detailed settings are provided in Appendix~\ref{app:ill_reasoning}.}. The base model and standard RL-trained methods still produce illegible reasoning, showing that final-answer correctness alone does not sufficiently control intermediate reasoning quality. CLORE consistently reduces such content across training objectives, yielding more interpretable and information-dense reasoning traces.

\section{Related Work}

\textbf{LLM Reasoning.}
Building on the observation that explicit intermediate reasoning improves the performance of large language models, prior work has scaled test-time computation through chain-of-thought prompting, self-consistency, and tree-structured reasoning~\cite{wei2022chain,wang2022self,yao2023tree}. This direction has been further strengthened by RL-based post-training, which encourages longer and more structured reasoning processes~\cite{yeo2025demystifying,guo2025deepseek,reddyprincipled}. However, these training objectives primarily optimize final-answer correctness and do not explicitly constrain the quality of intermediate reasoning steps~\cite{lightman2023let,guo2025deepseek}. As a result, models may generate and reinforce low-utility reasoning content, including redundant, spurious, or uninterpretable steps that do not contribute to the final prediction~\cite{jose2025reasoning,jiang2025makes,peng2025revisiting,cuesta2025large}. Such behavior leads to unnecessarily long reasoning chains, reduced inference efficiency, and noisier RL training signals~\cite{jose2025reasoning,yeo2025demystifying}.

\textbf{Overthinking and Legibility.}
Overthinking has been widely observed as a failure mode of long-reasoning models, where models produce unnecessarily verbose or redundant reasoning traces, often for problems that do not require extensive deliberation~\cite{sui2025stop,chen2025not,zhang2025llms}. It is commonly characterized as computational inefficiency, increasing token usage, latency, and inference cost~\cite{nayab2024concise,su2025between,hassid2025don}. This issue is closely related to chain-of-thought legibility, which concerns whether intermediate reasoning remains understandable and monitorable for humans~\cite{emmons2025pragmatic,kirchner2024prover}. Recent studies show that LLMs can produce illegible reasoning despite correct final answers~\cite{jose2025reasoning,roytburg2026measuring}. Such illegibility often arises from semantically ambiguous, irrelevant, disconnected, or uninterpretable content within the reasoning trace~\cite{emmons2025pragmatic,yang2025llm,wu2024easily}. Since policy-gradient training assigns credit across the generated trajectory, such low-quality content can dilute informative gradients, reduce the signal-to-noise ratio, and destabilize optimization~\cite{zhou2024can,guo2025deepseek,liu2026craft}. 

\textbf{Efficient Reasoning with RL.}
Recent work has explored efficient reasoning by controlling response length during LLM inference and RL training~\cite{aggarwal2025l1,liu2025learn,su2025thinking,li2025aalc}. One line of work imposes predefined or adaptive length budgets and trains models to satisfy these constraints through pruning or budget-aware optimization~\cite{xiang2025just,li2025selfbudgeter,huang2025adactrl}. Another line shapes length-aware rewards using reference lengths, average sampled lengths, or relative comparisons among multiple generated trajectories~\cite{su2025thinking,yuan2025efficient,liu2025learn}. Closely related multi-sampling strategies derive supervision from the distribution of sampled responses, for example by using the shortest response as a baseline~\cite{yi2025shorterbetter,fang2025serl,he2025thinkdial}. Building on this idea, recent work further anchors rewards to the shortest correct reasoning trace, motivated by the observation that an optimal reasoning length may exist~\cite{su2025between,bian2026trims}. These methods regulate length but not intermediate reasoning quality, leaving low-utility, illegible, or non-contributory content weakly penalized and potentially reinforced during RL training.

\vspace{-0.05in}
\section{Conclusion}
\vspace{-0.05in}
We introduced CLORE, a content-level optimization framework for efficient reasoning in RL-trained language models. Rather than only controlling response length, CLORE edits correct on-policy rollouts to remove low-quality reasoning content and trains the policy with augmented--original preference pairs. Experimental results show that CLORE improves the accuracy--efficiency trade-off and complements current length-level efficient reasoning methods. Further analyses demonstrate that CLORE reduces repetitive reasoning, illegible or task-irrelevant content, and unnecessary post-answer exploration. These results suggest that efficient reasoning requires not only length-level control, but also direct supervision over the quality of intermediate reasoning content.

\bibliographystyle{plain}
\bibliography{references}

\medskip


\newpage
\appendix
\newpage

\appendix

\newpage
\section*{\centering Appendix: Table of Contents}
\addcontentsline{toc}{section}{Appendix: Table of Contents}

\vspace{1em}

\noindent
\textbf{A. Limitations}
\dotfill
Page~\pageref{app:limitations}

\vspace{0.8em}

\noindent
\textbf{B. Inefficient and Low-Quality Reasoning Examples}
\dotfill
Page~\pageref{app:lowq_examples}

\vspace{0.8em}

\noindent
\textbf{C. Comparison of CLORE Augmentation}
\dotfill
Page~\pageref{app:augmentation_comparison}

\vspace{0.8em}

\noindent
\textbf{D. Theoretical Analysis}
\dotfill
Page~\pageref{app:theory}

\vspace{0.8em}

\noindent
\textbf{E. Experiment Implementation Details}
\dotfill
Page~\pageref{app:val_setup}

\vspace{0.8em}

\noindent
\textbf{F. Illegible Reasoning Analysis Setting}
\dotfill
Page~\pageref{app:ill_reasoning}

\vspace{0.8em}

\noindent
\textbf{G. Prompt for Augmentation Model}
\dotfill
Page~\pageref{app:momo_prompt}

\vspace{0.8em}

\noindent
\textbf{H. Computation Cost}
\dotfill
Page~\pageref{app:compute_cost}

\vspace{2em}

\section{Limitations.}
\label{app:limitations}
This work focuses primarily on improving reasoning efficiency in mathematical problem-solving tasks, and our experiments are conducted on representative 7B-scale open-weight reasoning models. While these settings provide a controlled and widely used testbed for studying the accuracy-efficiency trade-off, future work could extend CLORE to broader reasoning domains, larger model families, and more diverse task formats such as coding, retrieval-augmented reasoning, and tool-use scenarios. In addition, our analysis emphasizes several measurable forms of low-quality reasoning, including repetition, illegible content, and post-answer exploration; extending the analysis to finer-grained notions of reasoning quality and human interpretability would further deepen the understanding of content-level optimization.

\section{Inefficient and low-quality reasoning examples}
\label{sec:low_quality_reasoning_examples}
\label{app:lowq_examples}

To illustrate the kind of noise CLORE is designed to remove, we sample severs representative rollouts from a CLORE+DAPO training run on Qwen2.5-Math-7B, captured before the augmenting step. Together they cover the four most common pathologies we observe.

\subsection*{Case 1: Verbose code-augmented reasoning}

The model produces the correct answer but routes the entire derivation through a Python program rather than direct symbolic reasoning. The natural-language plan, the code block, and the captured \texttt{output} together inflate the rollout length without adding new content.

\begin{quote}\small
\textsc{Question} (\textit{gt}=318, \textit{acc}=1).\;
The sequence $12, 15, 18, 21, 51, 81, \ldots$ consists of all positive multiples of~$3$ that contain at least one digit that is a~$1$. What is the $50^{\text{th}}$ term of the sequence?

\medskip
\textsc{Rollout (excerpt).}\;
\textit{The problem asks us to find the 50th term in the sequence of positive multiples of 3 that contain at least one digit that is a 1. To solve this, we will write a Python code to generate the multiples of 3, filter out those that contain the digit 1, and then find the 50th term in this filtered sequence.}
\begin{verbatim}
```python
def contains_digit_one(n): return '1' in str(n)
def find_50th_valid_multiple():
    valid_multiples = []; multiple_of_3 = 3
    while len(valid_multiples) < 50:
        if contains_digit_one(multiple_of_3):
            valid_multiples.append(multiple_of_3)
        multiple_of_3 += 3
    return valid_multiples[49]
print(find_50th_valid_multiple())
216
\end{verbatim}
\textit{The 50th term \dots is $\boxed{216}$.}
\end{quote}

\subsection*{Case 2: Code verification that contradicts a wrong analytical answer}

The model derives an analytical answer ($25$), runs a one-line ``Python verification'' that simply prints the same incorrect number, and finally writes a different and also incorrect boxed answer ($2500$). The code block here is pure noise: it neither verifies the analytical claim nor catches the error.

\begin{quote}\small
\textsc{Question} (\textit{gt}=650, \textit{acc}=0).\;
Let $V$ be the set of vertices of a regular $25$-sided polygon with center at point $C$. How many triangles have vertices in $V$ and contain $C$ in their interior?

\medskip
\textsc{Rollout (excerpt).}\;
\textit{$\ldots$ Therefore, for each choice of the first vertex, there is exactly one valid choice for the third vertex. \dots This gives us $25$ ways to choose the vertices $(a,b,c)$.}
\begin{verbatim}
number_of_sides = 25
number_of_triangles = number_of_sides
print(number_of_triangles)
25
\end{verbatim}
\textit{The number of triangles \dots is $\boxed{2500}$.}
\end{quote}

\subsection*{Case 3: Multilingual gibberish and collapsed reasoning}

Occasionally the model emits a short sentence mixing tokens from several languages with no actual reasoning. These rollouts are short but contribute strongly to the degeneration patterns measured by rep-$n$ and the longest-repeated-substring metric.

\begin{quote}\small
\textsc{Question} (\textit{gt}=7, \textit{acc}=0).\;
Find the sum of the non-repeated roots of the polynomial $P(x) = x^{6} - 5x^{5} - 4x^{4} - 5x^{3} + 8x^{2} + 7x + 7$.

\medskip
\textsc{Rollout (verbatim).}\;
\textit{The Poole \kr{에게} \rui{являются} {\upshape 三\trad{個}} choices for the final answer of the |deg(h(x))| to be no greater than~$8$, as said by, {\upshape 综艺渺逃避}.}
\end{quote}

\subsection*{Case 4: Long but incorrect chain-of-thought}

The model writes out a multi-step natural-language analysis, attaches a Python brute-force simulation, and confidently picks the wrong answer despite the simulation enumerating all permutations. These are the longest rollouts in the batch and the ones the CE filter most often retains as candidates for deletion-only cleanup.

\begin{quote}\small
\textsc{Question} (\textit{gt}=127, \textit{acc}=0).\;
There are seven cards in a hat, and card $k$ shows the number $2^{k-1}$ for $k=1,\ldots,7$. Solarin picks cards at random one at a time until the sum exceeds $124$. What is the most probable sum he can get?

\medskip
\textsc{Rollout (excerpt).}\;
\textit{$\ldots$ The possible sums he can end up with are $125$, $126$, $127$. \dots The probability of ending with a sum of $127$ is the probability of picking the card with value $127$ first, which is $\tfrac{1}{7}$. \dots From the above analysis, the most probable sum is $126$, as it has the highest probability of occurring.}
\begin{verbatim}
import itertools
cards = [1, 2, 4, 8, 16, 32, 64]
all_sequences = list(itertools.permutations(cards))
def find_sum(sequence):
    current_sum = 0
    for card in sequence:
        current_sum += card
        if current_sum > 124: return current_sum
sum_counts = {}
for sequence in all_sequences:
    end_sum = find_sum(sequence)
    sum_counts[end_sum] = sum_counts.get(end_sum, 0) + 1
most_probable_sum = max(sum_counts, key=sum_counts.get)
\end{verbatim}
\textit{$\ldots$ The most probable sum is $\boxed{126}$.}
\end{quote}

\section{Comparison of CLORE augmentation}
\label{app:augmentation_comparison}

Since CLORE only cleans rollouts whose final answer is already correct, we now show three matched pairs of the same correct rollout before and after cleaning, taken from training step~21 of the Qwen2.5-Math-7B + DAPO + CLORE run. The boxed answer is identical in each pair; the augmented version removes content that is mechanical, redundant, or actively contradictory.

\paragraph{Case 1: Removing a redundant code-verification block.}
After deriving $r = 62/63$ analytically and arriving at $m+n = \boxed{125}$, the rollout appends a SymPy verification block whose only purpose is to recompute the same answer. CLORE deletes the entire verification while preserving the analytical derivation and the boxed answer.

\textsc{Question.}\;
\textit{An infinite geometric series has sum 2000. A new series, obtained by squaring each term, has sum 16 times the sum of the original. The common ratio is $m/n$ in lowest terms; find $m+n$.}

\medskip
\textsc{Before CLORE} (3{,}001 characters; tail of the response):
\begin{quote}\small
[\dots derivation arriving at $r = \tfrac{124}{126} = \tfrac{62}{63}$ and $m+n=125$ \dots]\\[2pt]
Let's confirm this with Python and sympy:
\begin{verbatim}
import sympy as sp
a, r = sp.symbols('a r')
sum_original = sp.Eq(a / (1 - r), 2000)
a_expr = sp.solve(sum_original, a)[0]
sum_new_series = sp.Eq(a**2 / (1 - r**2), 32000)
sum_new_series_sub = sum_new_series.subs(a, a_expr)
r_value = sp.solve(sum_new_series_sub, r)
r_value = [sol.evalf() for sol in r_value if sol.is_real and sol > 0][0]
common_ratio = sp.Rational(r_value).limit_denominator()
m, n = common_ratio.as_numer_denom()
print(m + n)
\end{verbatim}
\verb|```output 125 ```|\\
The sum of the numerators and denominators of $\tfrac{62}{63}$ is $\boxed{125}$.
\end{quote}

\textsc{After CLORE} (2{,}221 characters; $-780$ chars, $-26\%$):
\begin{quote}\small
[\dots same derivation arriving at $r = \tfrac{62}{63}$ \dots]\\[2pt]
The common ratio $r$ is $\tfrac{62}{63}$, where $62$ and $63$ are relatively prime. So $m=62$ and $n=63$, and $m + n = \boxed{125}$.
\end{quote}

\paragraph{Case 2: Removing a code block that contradicts the analytical solution.}
The model arrives at the correct answer $6$ analytically, then runs a Python verification that returns \texttt{False} due to floating-point arithmetic, and spends another 2{,}000 characters trying to debug the verification before re-deriving the same answer by hand. CLORE removes the entire failed-verification chain.

\textsc{Question.}\;
\textit{There exist positive integers $A,B,C$ with no common factor $>1$ such that $A\log_{200}5 + B\log_{200}2 = C$. Find $A+B+C$.}

\medskip
\textsc{Before CLORE} (3{,}566 characters; tail of the response):
\begin{quote}\small
[\dots correct derivation: $A=2C$, $B=3C$, so with $\gcd(A,B,C)=1$ we get $A=2$, $B=3$, $C=1$ and $A+B+C = 6$ \dots]\\[2pt]
Let's verify this with Python code to ensure the solution is correct.
\begin{verbatim}
import math
A, B, C = 2, 3, 1
left_side = A * math.log(5, 200) + B * math.log(2, 200)
right_side = C
print((left_side == right_side, A + B + C))
\end{verbatim}
\verb|```output (False, 6) ```|\\
It appears there was an error in the verification. [\dots tries again with \texttt{math.log(5)/math.log(200)}, gets \texttt{(False, 6)} again \dots]
It seems there is still a discrepancy. Let's manually check\dots\ [re-derives the same analytical solution by hand] \dots\ Thus, $A=2$, $B=3$, $C=1$ are correct, and $A+B+C = \boxed{6}$.
\end{quote}

\textsc{After CLORE} (1{,}348 characters; $-2{,}218$ chars, $-62\%$):
\begin{quote}\small
[\dots augment analytical derivation \dots]\\[2pt]
So the values of $A,B,C$ are $2,3,1$. Therefore $A+B+C = 2+3+1 = \boxed{6}$.
\end{quote}

\paragraph{Case 3: Removing multilingual and corrupted code.}
The model arrives at a partially-formed solution, then emits a long span of syntactically broken Python interleaved with unrelated tokens (Chinese fragments, proper names, \texttt{\_\_TIMEOUT\_\_}, stray brackets) before recovering with a clean Python block that produces the correct answer. CLORE removes the corrupted span and keeps only the working code.

\textsc{Question.}\;
\textit{Let $N = 9 + 99 + 999 + \cdots + \underbrace{99\ldots 9}_{321\text{ digits}}$. Find the sum of the digits of $N$.}

\medskip
\textsc{Before CLORE} (2{,}192 characters; middle of the response):
\begin{quote}\small
[\dots set up $N_k = 10^k - 1$ \dots] After calculating and taking the sum of the task, follow with \textbf{sum of the digits.}
\begin{verbatim}
# Python code to perform the sum of the series
def sum_of_series_mat[n_sum: int] ) sum(10**for in range(n_sumrov)
 + reindex\(10**幼svg](n_sum))
 return sumo\ Pablo Werner l a strateg.
**[clarification link](problem/5 Petr\404hnske)\ 历史上最 denim of ABOVEAi clear?
Our piteous兰坷水泵可乐\
宣布.是 complimentary -> eu
... There is an error in the J(\ J\ T\SGPromise - \component rect Step}
萧越 - \ To the funds * of within\ of of chefs o
... ___TIMEOUT___套
 Donovan\ hoot ... Beaver
\end{verbatim}
\begin{verbatim}
# sum the series from one to 321 digits of 9
def calculate_sum_of_series():
    total_sum = 0
    for k in range(1, 322):
        total_sum += 10**k - 1
    return total_sum
N = calculate_sum_of_series()
print(sum(int(d) for d in str(N)))
\end{verbatim}
\verb|```output 342 ```|\\
The sum of the digits of $N$ is $\boxed{342}$.
\end{quote}

\textsc{After CLORE} (1{,}126 characters; $-1{,}066$ chars, $-49\%$):
\begin{quote}\small
[\dots same setup \dots] After calculating and taking the sum of the task, follow with \textbf{sum of the digits.}
\begin{verbatim}
def calculate_sum_of_series():
    total_sum = 0
    for k in range(1, 322):
        total_sum += 10**k - 1
    return total_sum
N = calculate_sum_of_series()
print(sum(int(d) for d in str(N)))
\end{verbatim}
\verb|```output 342 ```|\\
The sum of the digits of $N$ is $\boxed{342}$.
\end{quote}

\paragraph{Summary of cleaning effect.}
Across these three cases, CLORE removes 780, 2{,}218, and 1{,}066 characters respectively, with mean reduction $\approx 46\%$ without changing the final answer. 

\section{Theoretical Analysis}
\label{app:theory}

This section gives a brief theoretical perspective on why CLORE's combination of a policy-gradient (PG) update on the cleaned trajectory $\tilde\tau$ and a reference-free DPO regularizer over deletion-augmented preference pairs yields stable content-level supervision. The argument is intentionally elementary, and we rewrite the joint gradient as a weighted MLE in which PG carries the main reinforcement signal that raises $\pi_\theta(\tilde\tau\mid x)$ while DPO contributes a bounded regularization term that re-engages $\pi_\theta(\tau\mid x)$ through the contrastive log-ratio, then use this form to read off two properties that justify CLORE's design.

\paragraph{Setup and notation.}
For prompt $x \sim \mathcal{D}$ and policy $\pi_\theta$, let $\tau \sim \pi_\theta(\cdot \mid x)$ be an on-policy rollout with reward $R(x,\tau)\in\{0,1\}$. For each correct rollout, the augmenter $\mathcal{E}$ produces $\tilde\tau = \mathcal{E}(x,\tau)$ with advantage $A(x,\tilde\tau)$. Writing $\Delta := \beta\!\left[\log\pi_\theta(\tau\mid x)-\log\pi_\theta(\tilde\tau\mid x)\right]$ for the implied (reference-free) preference logit, the CLORE objective on a single $(x,\tau,\tilde\tau)$ tuple is
\[
\mathcal{L}(x,\tau,\tilde\tau)
\;=\;
\underbrace{-A(x,\tilde\tau)\log\pi_\theta(\tilde\tau\mid x)}_{\mathcal{L}_{\mathrm{PG}}}
\;+\;
\lambda\underbrace{\big[-\log\sigma(\Delta)\big]}_{\mathcal{L}_{\mathrm{DPO}}},
\]
with $\sigma(z)=1/(1+e^{-z})$, preference strength $\beta>0$, and DPO weight $\lambda\ge 0$. We write $T=|\tau|$ for the rollout length and $h$ for an arbitrary prefix of tokens conditioned on by $\pi_\theta$.

\paragraph{Assumptions.}
We make the following assumptions throughout this section.

\begin{assumption}[Deletion-only augmentation]
\label{ass:local-edit}
$\mathcal{E}$ neither inserts new tokens nor reorders existing ones; equivalently, $\tilde\tau$ is a (possibly discontiguous) token subsequence of $\tau$. In particular, $|\tilde\tau|\le T$.
\end{assumption}

\begin{assumption}[Bounded conditional probabilities]
\label{ass:bounded-prob}
There exists $p_{\min}\in(0,1]$ such that, for every prefix $h$ encountered along $\tau$, the conditional next-token distribution under the current policy satisfies $\pi_\theta(\cdot\mid x,h)\in[p_{\min},1]$ for every token in the support of $\tau$ at that step.
\end{assumption}

We additionally assume $\pi_\theta(\cdot\mid x)$ is differentiable in $\theta$ wherever needed, which is automatic for transformer policies. Assumption~\ref{ass:local-edit} matches the operational design of CLORE, since the cleaner's system prompt (Appendix~\ref{app:momo_prompt}) instructs the augmenter to delete spans without paraphrasing or reordering, and the correctness-consistency filter $R(x,\tilde\tau)\ge R(x,\tau)$ in the Method section discards any $\tilde\tau$ whose final answer diverges from $\tau$. Assumption~\ref{ass:bounded-prob} is a mild technical floor that holds throughout training because the underlying transformer assigns strictly positive probability to every vocabulary token; in practice $p_{\min}$ can be enforced by softmax temperature or label smoothing, and we use it only to obtain non-asymptotic bounds.

\paragraph{Joint gradient as a weighted MLE.}

\begin{proposition}[CLORE as a weighted MLE on $(\tau,\tilde\tau)$]
\label{prop:weighted-mle}
The gradient of $\mathcal{L}$ with respect to $\theta$ admits the decomposition
\begin{equation}
\nabla_\theta \mathcal{L}
\;=\; -\,\underbrace{\lambda\beta\,\sigma(-\Delta)}_{\alpha}\,\nabla_\theta\log\pi_\theta(\tau\mid x)
\;-\; \underbrace{\big[A(x,\tilde\tau)-\lambda\beta\,\sigma(-\Delta)\big]}_{A(x,\tilde\tau)\,-\,\alpha}\,\nabla_\theta\log\pi_\theta(\tilde\tau\mid x),
\label{eq:weighted-mle}
\end{equation}
i.e.\ the negative gradient of a weighted log-likelihood on the trajectory pair $(\tau,\tilde\tau)$, with non-negative weight $\alpha = \lambda\beta\sigma(-\Delta)$ on $\tau$ and signed weight $A(x,\tilde\tau)-\alpha$ on $\tilde\tau$.
\end{proposition}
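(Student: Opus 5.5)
The plan is a direct differentiation of the per-tuple objective $\mathcal{L}(x,\tau,\tilde\tau)$, taking the two terms separately and then collecting coefficients of the two score functions $\nabla_\theta\log\pi_\theta(\tau\mid x)$ and $\nabla_\theta\log\pi_\theta(\tilde\tau\mid x)$. Throughout I treat the advantage $A(x,\tilde\tau)$ as a constant with respect to $\theta$, which is the usual convention: it is computed from rewards and a group baseline under a stop-gradient. I also treat $\tau$ and $\tilde\tau$ as fixed token sequences, since sampling and editing are not differentiated through. Differentiability of $\log\pi_\theta(\cdot\mid x)$ is guaranteed by the standing assumption. Assumption~\ref{ass:bounded-prob} ensures both log-likelihoods are finite, so $\Delta$ is a well-defined real number. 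Assumption~\ref{ass:local-edit} is not needed for this identity, only for the later properties.

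The first step handles the PG term. Since $A$ is constant, $\nabla_\theta\mathcal{L}_{\mathrm{PG}}=-A(x,\tilde\tau)\,\nabla_\theta\log\pi_\theta(\tilde\tau\mid x)$.

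The second step handles the DPO term via the chain rule through $\Delta$. I use the scalar identity $\frac{d}{dz}\big[-\log\sigma(z)\big]=-(1-\sigma(z))=-\sigma(-z)$. Combined with $\nabla_\theta\Delta=\beta\big[\nabla_\theta\log\pi_\theta(\tau\mid x)-\nabla_\theta\log\pi_\theta(\tilde\tau\mid x)\big]$, this gives
\[
\lambda\nabla_\theta\mathcal{L}_{\mathrm{DPO}}=-\lambda\beta\sigma(-\Delta)\,\nabla_\theta\log\pi_\theta(\tau\mid x)+\lambda\beta\sigma(-\Delta)\,\nabla_\theta\log\pi_\theta(\tilde\tau\mid x).
\]

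Adding the two pieces and setting $\alpha:=\lambda\beta\sigma(-\Delta)$ yields exactly \eqref{eq:weighted-mle}. The coefficient on $\nabla\log\pi_\theta(\tau\mid x)$ is $-\alpha$, and the coefficient on $\nabla\log\pi_\theta(\tilde\tau\mid x)$ is $-(A-\alpha)$.

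For the weighted-MLE reading, I will observe the following. If I freeze the weights $\alpha$ and $A-\alpha$ at their current values, the surrogate $-\alpha\log\pi_\theta(\tau\mid x)-(A-\alpha)\log\pi_\theta(\tilde\tau\mid x)$ has the same gradient at the current $\theta$. It is therefore a weighted negative log-likelihood on the pair. Non-negativity of $\alpha$ is immediate from $\lambda\ge0$, $\beta>0$ and $\sigma(\cdot)\in(0,1)$. This also gives the bound $0\le\alpha<\lambda\beta$, which I will note for later use. The weight on $\tilde\tau$ has no definite sign, because $A$ may take either sign.

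There is no real obstacle here. The only delicate points are bookkeeping. The first is the sign of the sigmoid derivative. The second is the orientation of $\Delta$: as written it is $\log\pi(\tau)-\log\pi(\tilde\tau)$, and I will follow that orientation literally. That literal orientation is what places $+\alpha$ inside the $\tilde\tau$ weight and gives the $\tau$ weight the sign stated in the proposition. I will double-check this against the definition rather than against intuition about which trajectory is preferred.
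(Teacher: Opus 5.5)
Your proposal is correct and matches the paper's proof essentially line for line. Like the paper, you differentiate the PG term with $A(x,\tilde\tau)$ held fixed, apply the chain rule to $-\log\sigma(\Delta)$ via $\frac{d}{dz}[-\log\sigma(z)]=-\sigma(-z)$, and collect the coefficients of the two score functions to obtain \eqref{eq:weighted-mle}. Your frozen-weight surrogate is a useful gloss on the ``weighted MLE'' wording, but the side bound should read $0\le\alpha\le\lambda\beta$, since the strict inequality fails when $\lambda=0$.
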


\begin{proof}
The PG term contributes $\nabla_\theta \mathcal{L}_{\mathrm{PG}} = -A(x,\tilde\tau)\,\nabla_\theta\log\pi_\theta(\tilde\tau\mid x)$ directly. For the DPO term, write $f(z)=-\log\sigma(z)$, so that $f'(z) = -\sigma(-z)$. Since $\Delta$ depends on $\theta$ through both log-probabilities, the chain rule gives
\[
\nabla_\theta \mathcal{L}_{\mathrm{DPO}}
\;=\; f'(\Delta)\,\nabla_\theta\Delta
\;=\; -\sigma(-\Delta)\cdot\beta\,\big[\nabla_\theta\log\pi_\theta(\tau\mid x)-\nabla_\theta\log\pi_\theta(\tilde\tau\mid x)\big].
\]
Multiplying by $\lambda$ and adding $\nabla_\theta \mathcal{L}_{\mathrm{PG}}$ collects coefficients on $\nabla_\theta\log\pi_\theta(\tau\mid x)$ and $\nabla_\theta\log\pi_\theta(\tilde\tau\mid x)$, yielding~\eqref{eq:weighted-mle}.
\end{proof}

Equation~\eqref{eq:weighted-mle} makes the role of each component explicit. PG carries the main reinforcement signal on the cleaned trajectory $\tilde\tau$ with weight $A(x,\tilde\tau)$, raising $\pi_\theta(\tilde\tau\mid x)$ as in any standard advantage-weighted update. The DPO term then acts as a regularizer, folding the original on-policy rollout $\tau$ explicitly back into the loss with weight $+\alpha$ on $\nabla_\theta\log\pi_\theta(\tau\mid x)$ and $-\alpha$ on $\nabla_\theta\log\pi_\theta(\tilde\tau\mid x)$, which anchors the policy to its on-policy distribution and prevents PG from drifting too aggressively toward the off-policy cleaned trajectory. The contrastive subtraction $-\alpha$ on $\tilde\tau$ is what mathematically distinguishes the joint objective from a naive mixture of PG with SFT on the cleaner's outputs. Empirically, the value of this regularization shows up in Table~\ref{tab:eval}, where the joint objective beats the pure-PG backbone across all five math test sets. The remaining two results bound and characterize $\alpha$.

\paragraph{Boundedness.}

\begin{lemma}[Bounded DPO logit under local deletion]
\label{lem:bounded-delta}
Under Assumptions~\ref{ass:local-edit}--\ref{ass:bounded-prob},
\[
|\Delta| \;\le\; \beta\,T\,\big|\!\log p_{\min}\big|,
\qquad\text{hence}\qquad
\sigma(-\Delta) \;\in\; \big[\sigma(-\beta T|\!\log p_{\min}|),\,\sigma(\beta T|\!\log p_{\min}|)\big]\subset(0,1).
\]
In particular, the regularization weight in~\eqref{eq:weighted-mle} satisfies $\alpha\le \lambda\beta$.
\end{lemma}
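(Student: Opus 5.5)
The plan is to bound each of the two sequence log-likelihoods in $\Delta$ separately by an interval of width $T|\log p_{\min}|$, and then use the fact that the difference of two numbers in the same interval $[-M,0]$ has absolute value at most $M$. By the autoregressive factorisation,
\[
\log\pi_\theta(\tau\mid x)=\sum_{t=1}^{T}\log\pi_\theta(y_t\mid x,y_{<t}).
\]
Assumption~\ref{ass:bounded-prob} gives each factor in $[p_{\min},1]$, so each summand lies in $[\log p_{\min},0]$. Summing the $T$ terms gives $\log\pi_\theta(\tau\mid x)\in[-T|\log p_{\min}|,\,0]$.

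The same argument is then applied to $\tilde\tau$, whose length is $|\tilde\tau|\le T$ by Assumption~\ref{ass:local-edit}. This gives
\[
\log\pi_\theta(\tilde\tau\mid x)\in[-|\tilde\tau|\,|\log p_{\min}|,\,0]\subseteq[-T|\log p_{\min}|,\,0].
\]
Both log-likelihoods therefore lie in $[-M,0]$ with $M=T|\log p_{\min}|$, so their difference has absolute value at most $M$. Multiplying by $\beta>0$ yields $|\Delta|\le\beta T|\log p_{\min}|$.

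The second claim follows from monotonicity of $\sigma$. Since $\sigma$ is strictly increasing, $-\Delta\in[-\beta M,\beta M]$ implies $\sigma(-\Delta)\in[\sigma(-\beta M),\sigma(\beta M)]$. Because $\sigma$ takes values in $(0,1)$ at every finite argument, this interval is contained in $(0,1)$. Finally, $\alpha=\lambda\beta\,\sigma(-\Delta)\le\lambda\beta$ because $\sigma(-\Delta)<1$ and $\lambda,\beta\ge 0$. This is exactly the regularisation weight identified in Proposition~\ref{prop:weighted-mle}.

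The main obstacle is the step for $\tilde\tau$. Assumption~\ref{ass:bounded-prob}, read literally, only controls conditionals at prefixes $h$ encountered along $\tau$. The tokens of $\tilde\tau$, however, are scored under prefixes of $\tilde\tau$, which after deletions are generally not prefixes of $\tau$. I would handle this by interpreting the assumption as covering every prefix arising in the pair $(\tau,\tilde\tau)$, or equivalently every token of the support of $\tau$ under any prefix formed from a subsequence of $\tau$. This reading matches the paper's justification that the softmax assigns positive mass to every token, so the floor $p_{\min}$ is really a uniform lower bound on the relevant vocabulary. With that reading stated explicitly, the bound for $\tilde\tau$ is identical to the one for $\tau$ and the rest is routine.
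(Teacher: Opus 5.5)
Your proof is correct and is essentially the paper's argument: bound both $\log\pi_\theta(\tau\mid x)$ and $\log\pi_\theta(\tilde\tau\mid x)$ in $[T\log p_{\min},0]$ via the autoregressive factorization and $|\tilde\tau|\le T$, subtract and scale by $\beta$, then use monotonicity of $\sigma$ and $\sigma(-\Delta)\le 1$ to get $\alpha\le\lambda\beta$. You are also right that Assumption~\ref{ass:bounded-prob}, read literally, covers only prefixes along $\tau$; the paper passes over this with ``similarly for $\tilde\tau$,'' so stating your strengthened reading (prefixes formed from subsequences of $\tau$) explicitly is the right way to close that step.
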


\begin{proof}
By autoregressive factorization, $\log\pi_\theta(\tau\mid x) = \sum_{t=1}^{T}\log\pi_\theta(\tau_t\mid x,\tau_{<t})$, and similarly for $\tilde\tau$ over its $|\tilde\tau|\le T$ tokens. Assumption~\ref{ass:bounded-prob} bounds each conditional in $[p_{\min},1]$, so each summand lies in $[\log p_{\min},0]$. Hence $\log\pi_\theta(\tau\mid x)\in[T\log p_{\min},0]$ and $\log\pi_\theta(\tilde\tau\mid x)\in[|\tilde\tau|\log p_{\min},0]\subseteq[T\log p_{\min},0]$, where the last inclusion uses $|\tilde\tau|\le T$ (Assumption~\ref{ass:local-edit}) together with $\log p_{\min}\le 0$. Subtracting and multiplying by $\beta$ gives $|\Delta|\le \beta T|\!\log p_{\min}|$. The bounds on $\sigma(-\Delta)$ follow from monotonicity of $\sigma$, and $\alpha\le\lambda\beta$ is immediate from $\sigma(-\Delta)\le 1$.
\end{proof}

An unrestricted rewriter that inserts low-probability tokens or reorders content could drive $|\Delta|\to\infty$ and saturate the DPO gradient. Assumption~\ref{ass:local-edit} rules this out and lets CLORE drop the explicit reference policy of standard DPO, with the deletion constraint anchoring $\tilde\tau$ to $\tau$ at the data level and PG anchoring $\pi_\theta$ to $\tilde\tau$ at the parameter level, so DPO acts as a bounded regularizer rather than competing with the main update. Empirically, the linear scaling $\alpha\le\lambda\beta$ matches the DPO-weight sweep in Figure~\ref{fig:ablation}, where increasing $\lambda$ from $0.05$ to $0.2$ keeps training stable rather than diverging.

\paragraph{Self-extinguishing.}

\begin{proposition}[Self-extinguishing DPO term at the no-edit fixed point]
\label{prop:self-extinguish}
If $\tilde\tau = \tau$ (no edits), then $\Delta\equiv 0$, $\mathcal{L}_{\mathrm{DPO}}\equiv\log 2$, and $\nabla_\theta\mathcal{L}_{\mathrm{DPO}}\equiv 0$.
\end{proposition}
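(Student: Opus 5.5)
The plan is to substitute $\tilde\tau=\tau$ directly into the definitions and then read the gradient off the chain-rule computation already carried out in the proof of Proposition~\ref{prop:weighted-mle}. Throughout, the pair $(\tau,\tilde\tau)$ is treated as fixed data: it is sampled and edited before the update, and no gradient flows through the sampler or the augmenter $\mathcal{E}$. Under this convention, $\tilde\tau=\tau$ is an equality of token sequences that does not depend on $\theta$.

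First, I would establish the identity $\Delta\equiv 0$. Since $\tilde\tau$ and $\tau$ are the same token sequence, the autoregressive factorizations of $\log\pi_\theta(\tau\mid x)$ and $\log\pi_\theta(\tilde\tau\mid x)$ are term-by-term identical. Hence $\log\pi_\theta(\tau\mid x)-\log\pi_\theta(\tilde\tau\mid x)=0$ for every $\theta$, not merely at the current parameter. This gives $\Delta(\theta)=0$ identically, which is the sense of ``$\equiv$'' in the statement.

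Second, I would evaluate the loss. Plugging in gives $\mathcal{L}_{\mathrm{DPO}}=-\log\sigma(0)=-\log\tfrac12=\log 2$, again for all $\theta$.

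Third, I would handle the gradient, and there are two equivalent routes. Because $\mathcal{L}_{\mathrm{DPO}}$ is constant in $\theta$ on a neighborhood (indeed everywhere), its gradient vanishes. Alternatively, the expression from the proof of Proposition~\ref{prop:weighted-mle} is
\[
\nabla_\theta\mathcal{L}_{\mathrm{DPO}}=-\sigma(-\Delta)\,\beta\big[\nabla_\theta\log\pi_\theta(\tau\mid x)-\nabla_\theta\log\pi_\theta(\tilde\tau\mid x)\big],
\]
and the bracket is zero because its two terms coincide. Under this second route, the prefactor evaluates to $\beta/2$, which is finite. Consequently, in the decomposition~\eqref{eq:weighted-mle} the $\pm\alpha$ contributions cancel and only the PG term $-A(x,\tau)\nabla_\theta\log\pi_\theta(\tau\mid x)$ survives.

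The only step needing care is the interpretation of ``$\equiv$''. The argument requires that $\tilde\tau=\tau$ hold as a $\theta$-independent identity rather than a coincidence at one parameter value. I would state this explicitly via the stop-gradient convention on sampled and edited data. Everything else is a direct substitution.
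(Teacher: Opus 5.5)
Your proposal is correct and matches the paper's proof. The paper substitutes $\tilde\tau=\tau$ to get $\Delta=0$, $\mathcal{L}_{\mathrm{DPO}}=-\log\sigma(0)=\log 2$ and $\sigma(-\Delta)=1/2$, then notes that the coefficients $\mp\lambda\beta/2$ on $\nabla_\theta\log\pi_\theta(\tau\mid x)$ and $\nabla_\theta\log\pi_\theta(\tilde\tau\mid x)$ in the weighted-MLE decomposition cancel, which is your second route; your constant-in-$\theta$ shortcut and explicit stop-gradient convention are harmless, slightly cleaner additions.
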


\begin{proof}
$\tilde\tau=\tau$ gives $\log\pi_\theta(\tilde\tau\mid x)=\log\pi_\theta(\tau\mid x)$, hence $\Delta=0$, $-\log\sigma(0)=\log 2$, and $\sigma(-\Delta)=\nicefrac{1}{2}$. Substituting into~\eqref{eq:weighted-mle} yields equal coefficients $-\lambda\beta/2$ on $\nabla_\theta\log\pi_\theta(\tau\mid x)$ and $+\lambda\beta/2$ on $\nabla_\theta\log\pi_\theta(\tilde\tau\mid x)$, which cancel because $\tilde\tau=\tau$.
\end{proof}

Proposition~\ref{prop:self-extinguish} shows that CLORE implements an implicit annealing schedule on the DPO regularizer without any hand-tuned schedule. The regularization is loud while $\tilde\tau$ still differs noticeably from $\tau$ and there is substantial content for the cleaner to delete, and it quietly fades once the policy has internalized the deletion behavior so that $\tilde\tau$ is already close to $\tau$ and PG alone keeps reinforcing the cleaned form. This matches the augmentation-analysis trend in Figure~\ref{fig:augmentation_analysis}, where the fraction of removed tokens shrinks and the fraction of retained tokens grows as training progresses.

\section{Experiment implementation details}
\label{app:val_setup}

We monitor model behavior during RL training by periodically running an evaluation rollout on a fixed suite of math test sets. Unless otherwise stated, all length curves and accuracy values in figures plotted against training step come from this evaluation pipeline.

\paragraph{Validation datasets.}
We evaluate on five standard math reasoning benchmarks, summarized in Table~\ref{tab:val_datasets}. For AIME~2025 and AMC~2023 we replicate each unique problem 32 times in the evaluation file, so the reported per-prompt accuracy is equivalent to averaging 32 samples per problem on the 30 (AIME~2025) or 40 (AMC~2023) unique problems.

\begin{table}[h]
\centering
\small
\begin{tabular}{lcc}
\toprule
Dataset & \# prompts & \# unique problems \\
\midrule
MATH500       & 500     & 500 \\
AIME 2025     & 960     & 30  \\
AMC 2023      & 1{,}280 & 40  \\
Minerva Math  & 272     & 272 \\
OlympiadBench & 675     & 675 \\
\bottomrule
\end{tabular}
\caption{Evaluation test sets used during training.}
\label{tab:val_datasets}
\end{table}

\textbf{Sampling configuration.} For each evaluation prompt we draw a single rollout with $\text{temperature}=1.0$, $\text{top-}p=0.7$, and $\text{top-}k$ disabled. Prompts are capped at 1024 tokens. The maximum response length matches the corresponding training configuration: $L_{\max}=8192$ for DeepSeek-R1-Distill-Qwen-7B runs and $L_{\max}=3072$ for Qwen2.5-Math-7B runs. ThinkPrune variants instead follow the curriculum's shorter cap, which is 1536 for Qwen2.5-Math-7B and 4096 or 6144 for DeepSeek-R1.

\textbf{Frequency and batching.} We evaluate every 20 training steps, with an evaluation batch size of 256 for DeepSeek-R1 runs and 512 for Qwen2.5-Math-7B runs. With roughly 540 training steps per run, this yields about 27 evaluation checkpoints per run, which we use for all length-vs-step curves.

\textbf{Scoring.} We extract the last \verb|\boxed{...}| span from each response with a balanced-brace parser, apply standard answer normalization (whitespace, formatting commands, common notational variants), and mark the response correct iff the normalized prediction exactly matches the normalized ground truth.

\textbf{AE score setting.} We provide additional details on the AE score used in the main experiments. 
AE score is designed to evaluate the trade-off between preserving task accuracy and reducing reasoning cost. 
For each method, we compare its accuracy and average output length against a corresponding base model without training as the baseline. 
Let $\mathrm{Acc}_{\mathrm{model}}$ and $\mathrm{Length}_{\mathrm{model}}$ denote the accuracy and average output length of the evaluated model, and let $\mathrm{Acc}_{\mathrm{baseline}}$ and $\mathrm{Length}_{\mathrm{baseline}}$ denote those of the baseline model. We define the relative length reduction and relative accuracy change as
\[
\Delta \mathrm{Length}
=
\frac{
\mathrm{Length}_{\mathrm{baseline}}-\mathrm{Length}_{\mathrm{model}}
}{
\mathrm{Length}_{\mathrm{baseline}}
},
\qquad
\Delta \mathrm{Acc}
=
\frac{
\mathrm{Acc}_{\mathrm{model}}-\mathrm{Acc}_{\mathrm{baseline}}
}{
\mathrm{Acc}_{\mathrm{baseline}}
}.
\]
A positive $\Delta \mathrm{Length}$ indicates that the evaluated model produces shorter responses than the baseline, while a positive $\Delta \mathrm{Acc}$ indicates improved accuracy. The Accuracy--Efficiency Score is then computed as
\[
\mathrm{AE score}
=
\begin{cases}
\alpha \cdot \Delta \mathrm{Length}
+
\beta \cdot |\Delta \mathrm{Acc}|, 
& \text{if } \Delta \mathrm{Acc} \geq 0, \\[4pt]
\alpha \cdot \Delta \mathrm{Length}
-
\gamma \cdot |\Delta \mathrm{Acc}|, 
& \text{if } \Delta \mathrm{Acc} < 0,
\end{cases}
\]
where $\alpha>0$, $\beta>0$, and $\gamma>0$ control the relative importance of length reduction, accuracy improvement, and accuracy degradation, respectively. In our experiments, we set $\alpha=1$, $\beta=3$, and $\gamma=5$. This choice assigns positive credit to output compression and accuracy improvement, while imposing a stronger penalty on accuracy degradation by setting $\gamma>\beta$. Therefore, a higher AE score indicates a better accuracy--efficiency trade-off, whereas a lower AE score indicates that the reduction in reasoning length is insufficient to compensate for accuracy loss.

\textbf{Other Implementation.} We use vLLM~0.11 as the inference engine for both training and evaluation, with \texttt{enforce\_eager=False} and \texttt{enable\_chunked\_prefill=True}, on top of the verl training framework. Prompts follow the model's native chat template (e.g.\ \verb!<|im_start|>!\texttt{user/assistant} tags for Qwen-family models), and we fix the sampling seed to \texttt{1234} so that evaluation rollouts are reproducible across ablation runs.

\section{Illegible reasoning analysis setting}
\label{app:ill_reasoning}
\paragraph{LLM-as-a-judge models.}
To reduce judge-specific bias, we average the result over three strong commercial LLMs as independent judges,
\texttt{gpt-5.5}, \texttt{claude-sonnet-4-6}, and \texttt{gemini-2.5-pro}.
Each judge receives the same evaluation prompt and estimates the percentage of illegible or task-irrelevant content in the reasoning trajectory. The final illegible reasoning rate reported is the average of over all evaluated responses. Lower values indicate that the method produces reasoning traces with fewer uninterpretable, semantically incoherent, or task-irrelevant segments.

\paragraph{LLM-as-a-judge prompt.}
We use the following prompt for each LLM judge:

\begin{quote}
You are evaluating the legibility of a model-generated reasoning trace for a mathematical problem.

Given the problem and the model response, estimate what percentage of the model response consists of illegible or task-irrelevant reasoning content.

Illegible or task-irrelevant reasoning includes:
\begin{itemize}
    \item semantically meaningless or corrupted text;
    \item irrelevant digressions unrelated to solving the problem;
    \item unexplained or nonsensical symbol manipulation;
    \item garbled multilingual text or non-natural-language spans that do not contribute to the solution;
    \item reasoning steps that are impossible to interpret as part of the problem-solving process.
\end{itemize}

Do not judge whether the final answer is correct. A response can have an incorrect final answer but still be legible. A response can also have a correct final answer but contain illegible reasoning.

Estimate the percentage of the response that is illegible or task-irrelevant. Return only a single number between \texttt{0} and \texttt{100}. Do not include any explanation.

Problem:
\{problem\}

Model response:
\{response\}

Illegible percentage:
\end{quote}

\section{Prompt for Augmentation Model}
\label{app:momo_prompt}

The CLORE augmentation model is invoked through a chat-formatted prompt that forbids rewriting and only allows span deletion. We use one-shot prompting, the augmentation model is provided with examples \emph{(question \& reasoning $\rightarrow$ augmented reasoning)} before being asked to augment the target rollout.

\paragraph{System instruction.}
\begin{quote}\small\itshape
You are a strict reasoning filter. Only delete spans from the original reasoning. Do not paraphrase, rewrite, reorder, or add anything. The output must be a subsequence of the input, keeping most of the original reasoning.
\end{quote}

\paragraph{User template.}
Each rollout is wrapped in a single user message of the form
\begin{quote}\small\ttfamily
\{\{instruction\}\}\\[2pt]
Question:\\
\{\{question\}\}\\[2pt]
Reasoning to augment:\\
\{\{rollout\}\}\\[2pt]
augmented reasoning:
\end{quote}
where \verb|{{instruction}}| carries the answer-format instruction described next.

\paragraph{Answer-format constraint.}
The instruction tells the augmentation model what the augmented reasoning must terminate in. For math benchmarks the augmented reasoning must end in a \verb|\boxed{...}| expression; for GSM8K-style splits it must end with \texttt{Answer:~} following the original GSM8K convention.

\paragraph{One-shot example.}
We prepend a fixed (question, original, augmented) triple to anchor the augmentation model on deletion-only behavior. The original reasoning contains a deliberately noisy Python code block (with garbled multilingual tokens and a \texttt{SyntaxError}) and an extraneous verification step; the augmentation model is expected to delete these while preserving the natural-language solution verbatim. The math-format variant is shown below; the GSM8K variant has identical structure but terminates in \texttt{Answer:~10}.

\medskip
\noindent\textsc{Question.}\quad Weng earns \$12 an hour for babysitting. Yesterday, she just did 50 minutes of babysitting. How much did she earn?

\medskip
\noindent\textsc{Original reasoning}:
\begin{quote}\small
Let's break it down step by step. We start by identifying the hourly pay rate \dots Since babysitting pay is given per hour and the time worked is given in minutes, it is natural to convert the hourly rate into a per-minute rate \dots However, to be extra sure that no arithmetic error has slipped in, we can quickly double-check this computation using Python:
\begin{verbatim}
```python
rate = 12/60
minutes = 50
earnings = rate * minutes
print(rate, earnings)
def is_valid so Alb([_##tion_##ing_xQ7z_Pm##_##er_NaN_endoftext_])
0.25 12.5
SyntaxError: closing parenthesis ')' 
does not match opening parenthesis '['
\end{verbatim}
The output above confirms that the conversion and multiplication were done correctly \dots Based on this calculation, Weng earned \$10 \dots \verb|\boxed{10}|
\end{quote}

\noindent\textsc{Augmented reasoning}:
\begin{quote}\small
Let's break it down step by step. We start by identifying the hourly pay rate \dots Since babysitting pay is given per hour and the time worked is given in minutes, it is natural to convert the hourly rate into a per-minute rate before continuing. There are 60 minutes in an hour, so we compute $12/60 = 0.2$ per minute \dots Now, Weng worked for 50 minutes in total. Using the per-minute rate, we multiply $0.2$ by $50$ to obtain her earnings. This gives $0.2 \times 50 = 10$ \dots Based on this calculation, Weng earned \$10 for her babysitting work. \verb|\boxed{10}|
\end{quote}

\paragraph{Sampling settings.}
We serve the augmentaion model with vLLM~0.11 (\texttt{enforce\_eager=False}, \texttt{enable\_chunked\_prefill=True}), sampling at $\text{temperature}=0.7$, $\text{top-}p=0.9$, $n=1$, and \texttt{max\_tokens} set to the length of the rollout being augmented plus a small slack. The same parameters apply to both augmentation model sizes.

\section{Computation Cost}
\label{app:compute_cost}

\paragraph{Hardware setting.}
All experiments run on an internal H200 cluster partition. The project uses an aggregate fleet of 24 H200 (141\,GB) GPUs. Each policy run takes 2 H200s, since the Adam state and activations on a 7B model at $L=8192$ overflow a single 141\,GB H200; CLORE-augmented runs additionally allocate 1 H200 to the external augmentation model served by vLLM. A CLORE-augmented run therefore consumes 3 H200s concurrently while non-CLORE baselines use 2, so the 24-GPU fleet co-schedules up to $\lfloor 24/3 \rfloor = 8$ CLORE ablations or up to 12 baselines in parallel. We use a Singularity container with vLLM~0.11 and FlashAttention-2.8.1.

\paragraph{Per-step phase breakdown.}
We log three phases per training step: \emph{rollout} (vLLM generation), \emph{update} (actor forward, backward, and optimizer), and \emph{augmentation} (sending correct rollouts to the external augmentation model and collecting augmented result). Mean per-step wall-clock for the principal configurations is shown in Table~\ref{tab:compute_per_step}.

\begin{table}[h]
\centering
\small
\setlength{\tabcolsep}{4.5pt}
\begin{tabular}{llccccc}
\toprule
Method & Backbone & GPUs/run & Rollout & Update & Aug. & Total \\
\midrule
GRPO & Qwen2.5-Math-7B & 2 & 72  & 97 & --  & 194 \\
GRPO + CLORE & Qwen2.5-Math-7B & 3 & 55  & 86 & 160 & 320 \\
\midrule
DAPO-overlong & Qwen2.5-Math-7B & 2 & 58  & 82 & --  & 162 \\
DAPO-overlong + CLORE & Qwen2.5-Math-7B & 3 & 43  & 75 & 80  & 214 \\
\midrule
Efficient-RLOO & Qwen2.5-Math-7B & 2 & 56  & 63 & --  & 136 \\
Efficient-RLOO + CLORE & DeepSeek-R1-7B & 3 & 128 & 83 & 234 & 296 \\
\bottomrule
\end{tabular}
\caption{Average per-step wall-clock time. We report rollout, policy-update, and augmentation time in seconds, averaged over approximately 540 training steps. The augmentation stage can partially overlap with the next rollout, so the total step time can be smaller than the sum of individual phases.}
\label{tab:compute_per_step}
\end{table}

The augmentation model adds 80--234\,s per step depending on rollout length, which translates to an additional 33\% (DAPO-overlong) to 65\% (GRPO) of per-step wall-clock on Qwen2.5-Math-7B. Because CLORE simultaneously shortens the policy's rollouts (e.g.\ Qwen GRPO: $1129 \to 618$ tokens), the rollout phase itself becomes faster and partially absorbs the augmentation model overhead.

\paragraph{Per-run GPU-hours.}
A typical 540-step run consumes the budget shown in Table~\ref{tab:compute_per_run}.

\begin{table}[h]
\centering
\small
\begin{tabular}{l c c c}
\toprule
Configuration & wall-clock (h) & GPUs & total per run (H200-h) \\
\midrule
Qwen2.5-Math-7B baseline (no CLORE)  & 20--32 & 2     &  40--64 \\
Qwen2.5-Math-7B + CLORE              & 32--48 & 2 + 1 &  95--145 \\
DeepSeek-R1-Distill-Qwen-7B + CLORE  & 41--72 & 2 + 1 & 120--220 \\
\bottomrule
\end{tabular}
\caption{Per-run compute envelope. CLORE adds the dedicated 1-GPU augmentation model. DeepSeek-R1 runs are roughly $2\times$ longer than Qwen runs because of the larger \texttt{max\_response\_length} (8192 vs.\ 3072).}
\label{tab:compute_per_run}
\end{table}

\paragraph{Total computational cost.}
The whole work consists of approximately 70 finished training runs and consumed roughly 3{,}060 H200-hours in aggregate ($\approx 2{,}630$ for policy training and $\approx 430$ for the augmentation model). The subset of runs whose checkpoints appear in the main paper accounts for roughly half of that ($\sim 1{,}800$ H200-hours). At full 24-GPU utilization the project budget corresponds to about 5--6 calendar days of fully-loaded compute.

\paragraph{FLOPs estimatation.}
We use the standard approximations $C_{\text{rollout}} \approx 2NT$ (forward only) and $C_{\text{update}} \approx 6NT$ (forward, backward, optimizer) with $N \approx 7\times 10^{9}$ for the 7B policy, batch $B=128$ prompts, $n=8$ samples, mean response length $\bar L$, and $S \approx 540$ steps, giving per-run policy FLOPs $C_{\text{policy}} \approx 8 N \bar L B n S$. The Qwen GRPO baseline ($\bar L \approx 1100$) costs about 45 ExaFLOP; Qwen GRPO + CLORE ($\bar L \approx 600$) drops to about 26 ExaFLOP; Qwen DAPO + CLORE ($\bar L \approx 700$) is about 31 ExaFLOP; and DeepSeek-R1 + CLORE ($\bar L \approx 4300$) is about 170 ExaFLOP. The augmentation model contributes $C_{\text{augmentation model}} \approx 2 N_c \bar L_c m S$ with $N_c \approx 4\times 10^{9}$ (Qwen3-4B-Instruct), augmented length $\bar L_c \approx 1500$, and $m \approx 500$ augmented samples per step, giving about 3.2 ExaFLOP per Qwen run, i.e.\ around 10\% of the policy compute on Qwen2.5-Math-7B and around 3\% on DeepSeek-R1. The Qwen3-1.7B augmentation model roughly halves this overhead.

\paragraph{Net FLOPs comparison.}
Despite the additional augmentation model pass and the DPO-Edit term, CLORE+GRPO ends up with lower total FLOPs than the GRPO baseline on Qwen2.5-Math-7B ($26 + 3.2 = 29.2$ vs.\ $45$ ExaFLOP) because the policy generates much shorter rollouts. On DeepSeek-R1 the augmentation model overhead is around 3\% on top of an already-larger policy budget, and CLORE again reduces the policy's rollout length, so total compute is comparable to a non-CLORE baseline of the same step count.

\paragraph{Evaluation cost.}
Post-training inference runs on an H200 cluster, with each (checkpoint, sampling setting, benchmark) tuple submitted as a separate vLLM~0.11 job. A Qwen2.5-Math-7B inference job (1 GPU, $L_{\max}=3072$, five benchmarks totalling 3{,}687 prompts) takes 25--40 minutes per (checkpoint, setting); a DeepSeek-R1 inference job (TP$=$2, $L_{\max}=8192$, same five benchmarks) takes 60--90 minutes. We report three sampling settings per checkpoint ($T{=}1.0,\, p{=}1.0$; $T{=}0$; $T{=}1.0,\, p{=}0.7$). For 14 checkpoints $\times$ 3 settings ($\approx 42$ inference jobs) plus ablation re-runs, the post-training evaluation budget is roughly 30 H200-hours on Qwen and 35 H200-hours on DeepSeek-R1, totalling about 65 H200-hours.

\paragraph{Summary.}
CLORE adds one H200 per active run, so the 24-GPU fleet supports 8 concurrent CLORE ablations. A single run costs roughly 95--145 H200-hours on Qwen2.5-Math-7B and 120--220 H200-hours on DeepSeek-R1, and the full project consumed about 3{,}060 H200-hours of training plus 65 H200-hours of evaluation, equivalent to 5--6 calendar days at full fleet utilization. The augmentation model adds 3--10\% policy FLOPs but the rollout shortening more than compensates, leaving CLORE's net policy FLOPs below the corresponding non-CLORE baseline.

\end{CJK*}

\end{document}